\pgfplotsset{compat=1.7}
\newtheorem{prop}{Proposition}
\newtheorem{definition}{Definition}
\newcommand{\minimize}{\mathop{\rm minimize}}
\newcommand{\maximize}{\mathop{\rm maximize}}
\newcommand{\argmin}{\mathop{\rm argmin}}
\newcommand{\biborder}[1]{}
\icmltitlerunning{Dataset Condensation via Efficient Synthetic-Data Parameterization}
\begin{document}
\twocolumn[
\icmltitle{Dataset Condensation via Efficient Synthetic-Data Parameterization}

\begin{icmlauthorlist}
\icmlauthor{Jang-Hyun Kim}{snu}
\icmlauthor{Jinuk Kim}{snu}
\icmlauthor{Seong Joon Oh}{naver,tb}
\icmlauthor{Sangdoo Yun}{naver}
\icmlauthor{Hwanjun Song}{naver}
\icmlauthor{Joonhyun Jeong}{face}
\icmlauthor{Jung-Woo Ha}{naver}
\icmlauthor{Hyun Oh Song}{snu}
\end{icmlauthorlist}

\icmlaffiliation{snu}{Department of Computer Science and Engineering, Seoul National University}
\icmlaffiliation{naver}{NAVER AI Lab}
\icmlaffiliation{face}{Image Vision, NAVER Clova}
\icmlaffiliation{tb}{University of Tübingen}

\icmlcorrespondingauthor{Hyun Oh Song}{hyunoh@snu.ac.kr}
\icmlkeywords{Machine Learning, Dataset Distillation, Classification, Continual learning}

\vskip 0.3in
]
\printAffiliationsAndNotice{}

\begin{abstract}
The great success of machine learning with massive amounts of data comes at a price of huge computation costs and storage for training and tuning. Recent studies on dataset condensation attempt to reduce the dependence on such massive data by synthesizing a compact training dataset. However, the existing approaches have fundamental limitations in optimization due to the limited representability of synthetic datasets without considering any data regularity characteristics. To this end, we propose a novel condensation framework that generates multiple synthetic data with a limited storage budget via efficient parameterization considering data regularity. We further analyze the shortcomings of the existing gradient matching-based condensation methods and develop an effective optimization technique for improving the condensation of training data information. We propose a unified algorithm that drastically improves the quality of condensed data against the current state-of-the-art on CIFAR-10, ImageNet, and Speech Commands. 
\end{abstract}

\section{Introduction}
\label{intro}

Deep learning has achieved great success in various fields thanks to the recent advances in technology and the availability of massive real-world data \citep{deeplearning}. However, this success with massive data comes at a price: huge computational and environmental costs for large-scale neural network training, hyperparameter tuning, and architecture search \citep{carbon,gpt,autoaugment,nas}.

An approach to reduce the costs is to construct a compact dataset that contains sufficient information from the original dataset to train models. A classic approach to construct such a dataset is to select the \textit{coreset} \citep{phillips2016coresets}. However, selection-based approaches have limitations in that they depend on heuristics and assume the existence of representative samples in the original data \citep{dsa}. To overcome these limitations, recent studies, called \textit{dataset condensation} or \textit{dataset distillation}, propose to synthesize a compact dataset that has better storage efficiency than the coresets \citep{dd}. The synthesized datasets have a variety of applications such as increasing the efficiency of replay exemplars in continual learning and accelerating neural architecture search \citep{dc}. 
\begin{figure}[!t]
    \centering
\def\rot{0}
\def\dd{0.3}
\def\ddd{0.2}
\def\channelwidth{6}
\resizebox{1.0\columnwidth}{!}{
\pgfdeclarelayer{fig}\pgfdeclarelayer{bg}
\pgfsetlayers{fig,bg}  
\begin{tikzpicture}[decoration={brace}][scale=1,every node/.style={minimum size=1cm},on grid]

\begin{pgfonlayer}{fig}
\foreach \i in {0,1,2,3,4,5,6} {
    \begin{scope}[
        xshift={.8*\channelwidth*\i},yshift={-.6*\channelwidth*\i},every node/.append style={
                yslant=0.2,xslant=0,rotate=\rot},yslant=0.2,xslant=0,rotate=\rot
            ]              
            \coordinate (A1\i) at (0,0);
            \coordinate (A3\i) at (3.5,-.7);
            \ifnum \i>0
            \ifnum \i=1 \def\cl1{purple} \fi
            \ifnum \i=2 \def\cl1{yellow} \fi
            \ifnum \i=3 \def\cl1{green} \fi
            \ifnum \i<4
            \draw[fill,\cl1!20,opacity=0.9] (A1\i) rectangle ++(1,1);
            \draw (A1\i) rectangle ++(1,1);
            \fi
            \fi

            \ifnum \i<6 \def\cl2{green} \fi
            \ifnum \i<4 \def\cl2{yellow} \fi
            \ifnum \i<2 \def\cl2{purple} \fi
            \ifnum \i<6
            \draw[fill,\cl2!20,opacity=0.9] (A3\i) rectangle ++(1,1);
            \draw (A3\i) rectangle ++(1,1);
            \fi
            \ifnum \i=4
            \draw [pattern=north west lines, pattern color=red!70!black!100]
                (A34) rectangle ++(1,1);
            \fi

            \coordinate (A5\i) at (3.5, 1.75);
            \ifnum \i=3
            \node[yslant=0, xslant=0, rotate=135] () at ($(A52)+(.38, .5)$) {...};
            \fi
            \ifnum \i<2
            \draw[fill,gray!20,opacity=0.9] (A5\i) rectangle ++(1,1);
            \draw (A5\i) rectangle ++(1,1);
            \fi
            \ifnum \i>4
            \draw[fill,gray!20,opacity=0.9] (A5\i) rectangle ++(1,1);
            \draw (A5\i) rectangle ++(1,1);
            \fi
            
    \end{scope}
}

\begin{scope}[every node/.append style={
            yslant=0.2,xslant=0,rotate=\rot},yslant=0.2,xslant=0,rotate=\rot
        ]              
    \draw [line width=0.2pt,color=red!70!black!100, pattern=north west lines,
    pattern color=red!70!black!100] 
        ($(A13) + (.05, .45)$) rectangle ++(.5, .5);              
    \draw [line width=0.2pt,color=red!70!black!100] 
        ($(A13) + (.65, .05)$) rectangle ++(.3, .3);

    \foreach \i in {0,1,2,3,4,5,6,7,8,9,10,11} {
        \draw[line width=.7pt,color=red!70!black!100, dotted] ($(A35) + (0.02,0.06 + 0.08*\i)$) -- ($(A35) + (0.99,0.06 + 0.08*\i)$);
    }
    \foreach \i in {0,1,2,3} {
        \draw[line width=.7pt,color=red!70!black!100, dotted] 
        ($(A13) + (.65, .05) + (0.00,0.03 + 0.08*\i)$) -- ($(A13) + (.65, .05) + (0.30,0.03 + 0.08*\i)$);
    }
\end{scope}

\coordinate (box) at ($(A50) - (3.8,0.1)$);
\draw (box) rectangle ++(2.8, 1.2);
\draw ($(box) + (0.03, 0.03)$) rectangle ++(2.74, 1.14);

\def\div{0.75}
\def\ybor{0.4}
\draw [latex-, color=blue!30!black!120]($(box)+(.2,2*\ybor)$)--($(box)+(\div,2*\ybor)$);
\draw [stealth-, color=red!70!black!100]($(box)+(.2,\ybor)$)--($(box)+(\div,\ybor)$);
\node[text width=2cm, minimum height=.7cm,
align=left, font=\scriptsize] at ($(box)+(\div*.5+2.94*.5,\ybor)$){Backpropagation};
\node[text width=2cm, minimum height=.7cm,
align=left, font=\scriptsize] at ($(box)+(\div*.5+2.94*.5,2*\ybor)$){Forward Pass};

\def\yint{2.45}
\foreach \j in {0,1} {
\foreach \i in {0,1} {
\begin{scope}[  
    xshift={2.2*\channelwidth*\i},yshift={.9*\channelwidth*\i},every node/.append style={
            yslant=0.2,xslant=0,rotate=\rot},yslant=0.2,xslant=0,rotate=\rot
        ]              

        \def\a{6.5}
        \def\b{-1.4+\yint*\j}
        \def\c{black!80}
        \def\d{blue!20}
        \ifnum\i=0
        \def\x{.53}
        \def\y{.7}
        \else 
        \def\x{.2}
        \def\y{.22}
        \fi

        \coordinate (A2\i0) at (\a, \b);
        \coordinate (A2\i1) at (\a, \b + \y);
        \coordinate (A2\i2) at (\a + \x, \b + \y);
        \coordinate (A2\i3) at (\a + \x, \b);

        \ifnum\i=1
        \draw[fill,\d,opacity=0.5] (A200) -- (A210) -- (A211) -- (A201) -- cycle;
        \draw[line width=0.1mm, \c] (A200) -- (A210) -- (A211) -- (A201) -- cycle;
        \draw[fill,\d,opacity=0.5] (A203) -- (A213) -- (A212) -- (A202) -- cycle;
        \draw[line width=0.1mm, \c] (A203) -- (A213) -- (A212) -- (A202) -- cycle;
        \draw[fill,\d,opacity=0.5] (A200) -- (A210) -- (A213) -- (A203) -- cycle;
        \draw[line width=0.1mm, \c] (A200) -- (A210) -- (A213) -- (A203) -- cycle;
        \draw[fill,\d,opacity=0.5] (A201) -- (A211) -- (A212) -- (A202) -- cycle;
        \draw[line width=0.1mm, \c] (A201) -- (A211) -- (A212) -- (A202) -- cycle;
        \fi

        \draw[fill,\d,opacity=.5] (A2\i0) rectangle ++(\x,\y);
        \draw[line width=0.1mm, \c] (A2\i0) rectangle ++(\x,\y);

\end{scope}
}
}

\node[text width=1.3cm, minimum height=.7cm, rounded corners=3pt, 
fill=blue!20!gray!20, inner sep=.05em, 
align=center, font=\scriptsize] at ($(A200)+(1.7,-0.89)$){};
\node[text width=1.3cm, minimum height=.7cm,
align=center, font=\scriptsize] at ($(A200)+(1.7,-0.785)$){Matching};
\node[text width=1.3cm, minimum height=.7cm,
align=center, font=\scriptsize] at ($(A200)+(1.7,-1.015)$){loss $D$};

\node[text width=1.3cm, minimum height=.7cm,
align=center, font=\scriptsize] at ($(A200)+(0.275,-0.75)$){Classifier};
\node[text width=1.3cm, minimum height=.7cm,
align=center, font=\scriptsize] at ($(A200)+(0.275,-1.)$){(shared)};

\node[text width=2cm, minimum height=.7cm,
align=left, font=\scriptsize] at ($(A30)+(0.35,-0.55)$){Synthetic};
\node[text width=2cm, minimum height=.7cm,
align=left, font=\scriptsize] at ($(A30)+(0.35,-0.8)$){training data};

\node[text width=2cm, minimum height=.7cm,
align=left, font=\scriptsize] at ($(A30)+(0.35,-0.55 + \yint)$){Original};
\node[text width=2cm, minimum height=.7cm,
align=left, font=\scriptsize] at ($(A30)+(0.35,-0.8 + \yint)$){training data};

\node[text width=2cm, minimum height=.7cm,
align=left, font=\scriptsize] at ($(A10)+(1.,-0.675)$){Condensed data};

\def\ya{-0.95 + 6.4 * 0.2}
\def\yb{-1.05 + 6.4 * 0.2}
\def\yc{-1.15 + 6.4 * 0.2}

\draw [-latex, color=blue!30!black!120](1.7,\ya)--(3.3,\ya);
\draw [-latex, color=blue!30!black!120](5.7,\yb + \yint)--(6.3,\yb + \yint);
\draw [stealth-, color=red!70!black!100](1.7,\yc)--(3.3,\yc);

\draw [-latex, color=blue!30!black!120](5.55,\ya)--(6.3,\ya);
\draw [stealth-, color=red!70!black!100](5.55,\yc)--(6.3,\yc);

\draw [-latex, color=blue!30!black!120] (7.35, \ya) arc (270:360:.7);
\draw [stealth-, color=red!70!black!120] (7.35, \yc) arc (270:360:.9);
\draw [-latex, color=blue!30!black!120] (7.35, \yb + \yint) arc (90:0:.8);

\node [anchor=west] at (1.8, \ya + .55) {\scriptsize \textbf{\textit{Multi-}}};   
\node [anchor=west] at (1.8, \ya + .3) {\scriptsize \textbf{\textit{formation} $\boldsymbol{f}$}};   

\end{pgfonlayer}
\end{tikzpicture}
}
\vspace{-2.5em}
\caption{Illustration of the proposed dataset condensation framework with multi-formation. Under the fixed-size storage for condensed data, multi-formation synthesizes multiple data used to train models. We optimize the condensed data in an end-to-end fashion by using the differentiable multi-formation functions.} 
\vspace{-1em}
\label{fig:decoding}
\end{figure}
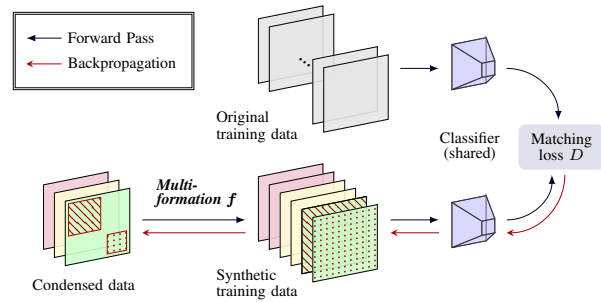

The natural data satisfy regularity conditions that form a low-rank data subspace \citep{huang1999}, \textit{e.g.}, spatially nearby pixels in a natural image look similar and temporally adjacent signals have similar spectra in speech \citep{speech_recog}. However, the existing condensation approaches directly optimize each data element, \textit{e.g.,} pixel by pixel, without imposing any regularity conditions on the synthetic data \citep{kip,dm}. Under the limited storage budget, this inefficient parameterization of synthetic datasets results in the synthesis of a limited number of data, having fundamental limitations on optimization. Furthermore, optimizing the synthetic data that have comparable training performance to the original data is challenging because it requires unrolling the entire training procedure. Recent studies propose surrogate objectives to address the challenge above, however, there are remaining questions on why certain objectives are better proxies for the true objective \citep{dm,dsa}.

In this work, we pay attention to making better use of condensed data elements and propose a novel optimization framework resolving the previous limitations. Specifically, we introduce a \textit{multi-formation} process that creates multiple synthetic data under the same storage constraints as existing approaches (\Cref{fig:decoding}). Our proposed process naturally imposes regularity on synthetic data while increasing the number of synthetic data, resulting in an enlarged and regularized dataset. In \Cref{theory}, we theoretically analyze the multi-formation framework and examine the conditions where the improvement is guaranteed. 
We further analyze the optimization challenges in the gradient matching method by \citet{dsa}  in \Cref{analysis}. Their approach induces imbalanced network gradient norms between synthetic and real data, which is problematic during optimization. Based on our analysis and empirical findings, we develop improved optimization techniques utilizing networks trained on the real data with stronger regularization and effectively mitigate the mentioned problems.

In this regard, we present an end-to-end optimization algorithm that creates information-intensive condensed data significantly outperforming all existing condensation methods. Given fixed storage and computation budgets, neural networks trained on our synthetic data show performance improvements of 10$\sim$20\%p compared to state-of-the-art methods in experimental settings with various datasets and domains including ImageNet and Speech Commands \cite{google_speech}. We further verify the utility of our condensed data through experiments on continual learning, demonstrating significant performance improvements compared to existing condensation and coreset methods. We release the source code at {\small\url{https://github.com/snu-mllab/Efficient-Dataset-Condensation}}.

\section{Preliminary}
\label{prelim}
Given the storage budget, the goal of data condensation is to build a surrogate dataset $\mathcal{S}$ of the original training dataset $\mathcal{T}$ such that an arbitrary model trained on $\mathcal{S}$ is similar to the one trained on $\mathcal{T}$ \citep{dd}. Oftentimes, the measure of similarity is in terms of the model performance on the test set because that leads to meaningful applications such as continual learning and neural architecture search \citep{dc}. Instead of solving this ultimate objective, previous methods have proposed different surrogates. For example, \citet{dd} propose to optimize $\mathcal{S}$ such that a model trained on $\mathcal{S}$ minimizes the loss values over $\mathcal{T}$. However, this approach involves a nested optimization with unrolling multiple training iterations, requiring expensive computation costs.\looseness=-1

Rather than direct optimization of model performance, \citet{dc} propose a simpler optimization framework that matches the network gradients on $\mathcal{S}$ to the gradients on $\mathcal{T}$. Let us assume a data point is $m$-dimensional and {\small$\mathcal{S}\in\mathbb{R}^{n\times m}$}, where $n$ is the number of data points in $\mathcal{S}$. \citet{dc} optimize the synthetic data as

\vspace{-1.8em}
{\small 
\begin{align}
\label{eq:matching}
    &\maximize\limits_{\mathcal{S}\in\mathbb{R}^{n\times m}}\ \sum_{t=0}^\tau \text{Cos}\left(\nabla_\theta\ell(\theta_t;\mathcal{S}), \nabla_\theta\ell(\theta_t;\mathcal{T})\right)\\[-0.25em]
    &\ \mathrm{subject}\ \mathrm{to}\ \ \theta_{t+1} = \theta_t - \eta \nabla_\theta\ell(\theta_t;\mathcal{S})\ \ \text{for}\ t = 0,\ldots,\tau-1, \nonumber
\end{align}
}%

\vspace{-0.7em}
where $\theta_t$ denotes the network weights at $t^{\mathrm{th}}$ training step from the randomly initialized weights $\theta_0$ given $\mathcal{S}$, {\small$\ell(\theta;\mathcal{S})$} denotes the training loss for weight $\theta$ and the dataset $\mathcal{S}$. {\small$\text{Cos}(\cdot,\cdot)$} denotes the channel-wise cosine similarity.
\citet{dc} have reported that the class-wise gradient matching objective is effective for dataset condensation. They propose an alternating optimization algorithm with the following update rules for each class $c$:

\vspace{-1.4em}
{\small
\begin{align*}
&S_c\leftarrow S_c + \lambda \nabla_{S_c} \text{Cos}\left(\nabla_\theta\ell(\theta;S_c), \nabla_\theta\ell(\theta;T_c)\right) \\[0.05em]
&\theta \leftarrow \theta - \eta \nabla_{\theta} \ell(\theta;\mathcal{S}), 
\end{align*}
}

\vspace{-1.6em}
where {\small$S_c$} and {\small$T_c$} denote the mini-batches from the datasets {\small$\mathcal{S}$} and {\small$\mathcal{T}$}, respectively. Under the formulation, \citet{dsa} propose to utilize differentiable siamese augmentation (DSA) for a better optimization of the synthetic data. DSA performs gradient matching on augmented data where the objective becomes {\small$\mathbb{E}_{\omega\sim\mathcal{W}}\left[\text{Cos}\left(\nabla_\theta\ell(\theta;a_\omega(\mathcal{S})), \nabla_\theta\ell(\theta;a_\omega(\mathcal{T}))\right)\right]$}. Here, $a_\omega$ means a parameterized augmentation function and $\mathcal{W}$ denotes an augmentation parameter space. Subsequently, \citet{dm} propose to match the hidden features rather than the gradients for fast optimization. However, the feature matching approach has some performance degradation compared to gradient matching \citep{dm}. 
Although this series of works have made great contributions, there are remaining challenges and questions on their surrogate optimization problems. 
In this work, we try to resolve the challenges by providing a new optimization framework with theoretical analysis and empirical findings.

\section{Multi-Formation Framework} \label{method}
In this section, we pay attention to the synthetic-data parameterization in optimization and present a novel data formation framework that makes better use of condensed data. We first provide our motivating observations and introduce a multi-formation framework with theoretical analysis.

\subsection{Observation}
We first provide our empirical observations on the effects of the number and resolution of the synthetic data in the matching problem.
The existing condensation approaches aim to synthesize a predetermined number of data about 10 to 50 per class \citep{dsa,kip}. The left subfigure in \Cref{fig:loss} shows the condensation matching loss curves of DSA over various numbers of synthetic data per class. As shown in the figure, more synthetic data lead to a smaller matching loss, indicating the importance of the number of synthetic data in the matching problem. For a comparison under the same data storage budget, we measure the matching loss on the same network after reducing the resolution of the optimized synthetic data and resizing the data to the original size. In the right subfigure in \Cref{fig:loss}, we find the resolution produces a moderate change in matching loss as the number of data does, even if we do not take the resolution modification into account during the condensation stage. For example, points at (16, 48) and (32, 12), which require an equal storage size, have similar loss values. Motivated by these results, we propose a multi-formation framework that makes better use of the condensed data and forms the increased number of synthetic data under the same storage budget.

\begin{figure}[t]
    \begin{tikzpicture}[define rgb/.code={\definecolor{mycolor}{RGB}{#1}},
                    rgb color/.style={define rgb={#1},mycolor}]

\definecolor{gr}{RGB}{60,160,100}
\definecolor{or}{RGB}{200,140,80}
\pgfplotsset{
/pgfplots/colormap={hot2}{[1cm]rgb255(0cm)=(255,255,255) rgb255(2cm)=(240,140,20)
rgb255(5cm)=(160,0,0) rgb255(8cm)=(0,0,0)}
}

\begin{groupplot}[
        group style={columns=2, horizontal sep=1.4cm, 
        vertical sep=0.0cm},
        ]

\nextgroupplot[
            width=4.05cm,
            height=4.05cm,
            every axis plot/.append style={thick},
            grid=major,
            scaled ticks = false,
            xlabel near ticks,
            ylabel near ticks,
            tick pos=left,
            tick label style={font=\scriptsize},
            xlabel shift=-0.17cm,       
            ylabel shift=-0.18cm,
            label style={font=\scriptsize},
            xlabel style={align=center},
            xlabel={Iteration},
            ylabel={Matching loss},
            xmin=0,
            xmax=1000,
            ymax=400,
            ymin=0,
            xtick={0,250,500,750,1000},
            ytick={0,100,200,300},
            legend cell align=left,
            legend style={anchor=north east, nodes={scale=0.7}},
            ]

\addplot[red] table [y=ipc1, col sep=comma]{data/loss.csv};\addlegendentry{$n$=1}

\addplot[or] table [y=ipc10, col sep=comma]{data/loss.csv};\addlegendentry{$n$=10}

\addplot[gr] table [y=ipc50, col sep=comma]{data/loss.csv};\addlegendentry{$n$=50}

\nextgroupplot[
            width=3.95cm,
            height=4.05cm,
            colorbar,
            colorbar style={
                yticklabel style={
                    /pgf/number format/.cd,
                    fixed,
                    precision=0,
                    fixed zerofill,
                    },
                width=0.06*\pgfkeysvalueof{/pgfplots/parent axis width},
                ticklabel style={font=\scriptsize},
                ytick style={draw=none},
                },
            colorbar shift/.style={xshift=0.2cm},
            every axis plot/.append style={thick},
            title=Matching loss,
            title style={yshift=-0.25cm},
            xlabel=Resolution,
            ylabel=Number of data,
            enlargelimits=false,
            xlabel shift=-0.12cm,         
            ylabel shift=-0.1cm,
            style={font=\scriptsize},
            tick style={draw=none},
            label style={font=\scriptsize},
            symbolic x coords={8,11,16,32},
            symbolic y coords={48,24,12,6,3},
            ytick={48,24,12,6,3},
            scaled ticks = false,
            point meta min=50,
            point meta max=420,
            ]

\addplot [matrix plot,point meta=explicit] file {heat.dat};


\end{groupplot}
\end{tikzpicture}
    \vspace{-2.1em}
    \caption{(Left) Matching loss curves over an increasing number of synthetic data per class ($n$). (Right) Matching loss heat map over various resolutions and numbers of data per class. The x-axis refers to the downsampled image resolution. We measure values on the same network after resizing data to the original size (CIFAR-10).}
    \label{fig:loss}
    \vspace{-0.6em}
\end{figure}

\subsection{Multi-Formation}
The existing approaches directly match condensed data $\mathcal{S}$ to the original training data $\mathcal{T}$ and use $\mathcal{S}$ as the synthetic training data. Instead, we add an intermediate process that creates an increased number of synthetic data from $\mathcal{S}$ by mapping a data element in $\mathcal{S}$ to multiple data elements in the synthetic data (\Cref{fig:decoding}). The previous work by \citet{dsa} reports that the use of random augmentations in matching problems degrades performance due to the misalignment problem. They argue the importance of the deterministic design of the matching problem. In this regard, we propose to use a deterministic process rather than a random process.\looseness=-1

Consistent to existing approaches, we optimize and store condensed data {\small$\mathcal{S}\in\mathbb{R}^{n\times m}$}. For {\small$n'>n$}, we propose a multi-formation function {\small$f:\mathbb{R}^{n\times m}\to \mathbb{R}^{n'\times m}$} that augments the number of condensed data $\mathcal{S}$ and creates multiple synthetic training data {\small$f(\mathcal{S})$} in a deterministic fashion. 
For any matching objective $D$ (lower the better) and target task objective $\ell$, the optimization and evaluation stages of condensed data $\mathcal{S}$ with multi-formation function $f$ are
\vspace{-1.7em}

{\small
\begin{align*}
\mathcal{S}^* &= \argmin_{\mathcal{S}\in\mathbb{R}^{n\times m}} D(f(\mathcal{S}),\mathcal{T}) \tag*{(Optimization)} \\[-0.1em] 
\theta^* &= \argmin_\theta \ell(\theta; f(\mathcal{S}^*)). \tag*{(Evaluation)}
\end{align*}
}

\vspace{-1.2em}
That is, we perform matching on $\mathcal{T}$ using {\small$f(\mathcal{S})$} and use them for evaluation. This enables us to optimize the synthetic dataset with an increased number of data, using the same storage budget. \Cref{fig:decoding} illustrates the optimization process with multi-formation. Note, we can use conventional data augmentations following the multi-formation. 

Given a differentiable multi-formation function and matching objective, we optimize $\mathcal{S}$ in an end-to-end fashion by gradient descent. In this work, we design a simple differentiable multi-formation function and evaluate the effectiveness of our approach. The idea is to locally interpolate data elements while preserving the locality of natural data, \textit{i.e.}, spatially nearby pixels in a natural image look similar and temporally adjacent signals have similar spectra in speech \cite{huang1999, speech_recog}. Specifically, we partition each data and resize the partitioned data to the original size by using bilinear upsampling (\Cref{fig:formation}). Note, this formation function has negligible computation overhead. Furthermore, the formation function creates locally smooth synthetic data that might naturally regularize the optimization from numerous local minima. We use a fixed uniform partition function in our main experiments in \Cref{exp} and further analyze multi-scale and learnable formation functions in \Cref{appendix:formation}.\looseness=-1

\begin{figure}[t]
    \includegraphics[width=\columnwidth]{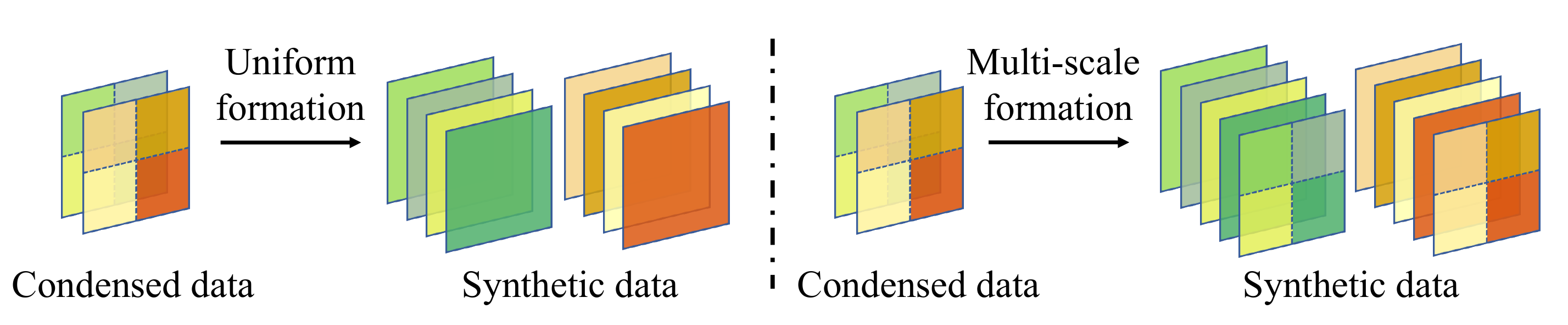}  
    \vspace{-2em}
    \caption{Illustration of the proposed multi-formation functions, in the case of multi-formation by a factor of 2.}
    \label{fig:formation}
    \vspace{-0.8em}
\end{figure}

\subsection{Theoretical Analysis}
\label{theory}
In this section, we aim to theoretically analyze our multi-formation framework. Here, we assume a data point is $m$-dimensional. The natural data have regularity that makes difference from random noise \citep{huang1999}. We assume that data satisfying this regularity form a subspace {\small$\mathcal{N}\subset \mathbb{R}^m$}. That is, the original training dataset  {\small$\mathcal{T}=\{t_i\}_{i=1}^{n_t}$} satisfies {\small$t_i\in\mathcal{N}$} for $i=1,\ldots,n_t$. 
With abuse of notation, we denote the space of datasets with $n$ data points as {\small$\mathbb{R}^{n\times m}=\{\{d_i\}_{i=1}^{n}\mid d_i\in \mathbb{R}^m\ \text{for}\  i=1,\ldots,n\}$}.
We further define the space of all datasets {\small$\mathcal{D}=$} $\cup_{n\in\mathbb{N}}${\small$\mathbb{R}^{n\times m}$} and the synthetic-dataset space of a multi-formation function {\small$f:\mathbb{R}^{n\times m}\to \mathbb{R}^{n'\times m}$}, {\small$\mathcal{M}_f=\{f(\mathcal{S})\mid \mathcal{S}\in\mathbb{R}^{n\times m}\}$}. We now introduce our definition of distance measure between datasets. We say data $d$ is closer to dataset {\small$X=\{d_i\}_{i=1}^{k}$} than $d'$, if {\small$\forall i\in[1,\ldots,k],\ \|d-d_i\|\le\|d'-d_i\|$}. 

\begin{definition}
A function {\small$D:\mathcal{D}\times\mathcal{D}\to [0,\infty)$} is a dataset distance measure, if it satisfies the followings:
{\small$\forall X, X'\in \mathcal{D}$} where {\small$X=\{d_i\}_{i=1}^k$}, {\small$\forall i \in[1,\ldots,k]$},
\vspace{-0.6em}
\begin{enumerate}\itemsep=0pt
    \item {\small$D(X,X)=0$} and {\small$D(X,X')=D(X',X)$}.
    \item {\small$\forall d\in \mathbb{R}^m\ s.t.\ d$} is closer to {\small$X'$} than {\small$d_{i},\ D(X\setminus\{d_{i}\}\cup{\{d\}}, X') \le D(X, X')$}.
    \item {\small$D(X, X'\cup\{d_{i}\})\le D(X, X')$}.
\end{enumerate}
\end{definition}

\vspace{-0.6em}
The definition above states reasonable conditions for dataset distance measurement. Specifically, the second condition states that the distance decreases if a data point in a dataset moves closer to the other dataset. The third condition states that the distance decreases if a data point in a dataset is added to the other dataset. Based on the definition, we introduce the following proposition. We provide the proof in \Cref{proof}.

\begin{prop}
    If {\small$\mathcal{N}^{n'}\subseteq \mathcal{M}_f$}, then for any dataset distance measure $D$,
    
    \vspace{-2.2em}
    {\small
    \begin{align*}
        \min_{\mathcal{S}\in \mathbb{R}^{n\times m}} D(f(\mathcal{S}),\mathcal{T}) \le \min_{\mathcal{S}\in \mathbb{R}^{n\times m}} D(\mathcal{S},\mathcal{T}).
    \end{align*}
    }
    \label{prop:bound}
\end{prop}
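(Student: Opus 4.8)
The plan is to exhibit, for the minimizer $\mathcal{S}^\star$ of the right-hand side, a synthetic dataset in $\mathcal{M}_f$ whose distance to $\mathcal{T}$ is no larger than $D(\mathcal{S}^\star,\mathcal{T})$; since $\mathcal{N}^{n'}\subseteq\mathcal{M}_f$ by hypothesis, it suffices to produce such a dataset inside $\mathcal{N}^{n'}$. I would build it from $\mathcal{S}^\star$ in two monotone steps, each justified by one of the axioms of a dataset distance measure.

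First step: project every point of $\mathcal{S}^\star=\{s_i\}_{i=1}^n$ orthogonally onto the regularity subspace $\mathcal{N}$. Because each $t_j\in\mathcal{T}$ lies in $\mathcal{N}$, the Pythagorean identity gives $\|\mathrm{proj}_{\mathcal{N}}(s_i)-t_j\|\le\|s_i-t_j\|$ for every $j$, i.e. $\mathrm{proj}_{\mathcal{N}}(s_i)$ is closer to $\mathcal{T}$ than $s_i$ in the sense of the paper. Replacing the points one at a time and invoking condition~2 of the definition at each step yields $\tilde{\mathcal{S}}=\{\mathrm{proj}_{\mathcal{N}}(s_i)\}_{i=1}^n\in\mathcal{N}^{n}$ with $D(\tilde{\mathcal{S}},\mathcal{T})\le D(\mathcal{S}^\star,\mathcal{T})$; the hypothesis of condition~2 at each step depends only on the second argument $\mathcal{T}$, so this induction goes through cleanly regardless of which points have already been replaced.

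Second step: enlarge $\tilde{\mathcal{S}}$ from $n$ to $n'$ points while staying in $\mathcal{N}$. Here I would use symmetry (condition~1) to rewrite $D(\tilde{\mathcal{S}},\mathcal{T})=D(\mathcal{T},\tilde{\mathcal{S}})$ and then apply condition~3 with the roles of the two arguments swapped: appending any point of $\mathcal{T}$ to $\tilde{\mathcal{S}}$ does not increase the distance. Repeating this $n'-n$ times (reusing points of $\mathcal{T}$ if necessary, which is harmless under the multiset reading of a dataset, and nonvacuous precisely because $n'>n$) produces $\mathcal{Y}\in\mathcal{N}^{n'}$ with $D(\mathcal{Y},\mathcal{T})\le D(\tilde{\mathcal{S}},\mathcal{T})\le D(\mathcal{S}^\star,\mathcal{T})$. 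Since $\mathcal{Y}\in\mathcal{N}^{n'}\subseteq\mathcal{M}_f$, I pick $\mathcal{S}'\in\mathbb{R}^{n\times m}$ with $f(\mathcal{S}')=\mathcal{Y}$ and conclude $\min_{\mathcal{S}}D(f(\mathcal{S}),\mathcal{T})\le D(f(\mathcal{S}'),\mathcal{T})=D(\mathcal{Y},\mathcal{T})\le D(\mathcal{S}^\star,\mathcal{T})=\min_{\mathcal{S}}D(\mathcal{S},\mathcal{T})$.

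The individual steps are short; the points that need care are getting the direction of condition~3 right — it only lets one append a point of the \emph{first} argument to the \emph{second}, so the symmetry axiom must be invoked first to turn it into the statement actually needed — and verifying that orthogonal projection onto $\mathcal{N}$ genuinely meets the ``closer to $\mathcal{T}$'' requirement of condition~2, which is exactly where the assumption $\mathcal{T}\subseteq\mathcal{N}$ is used. Everything else is bookkeeping.
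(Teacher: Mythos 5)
Your proof is correct and follows essentially the same route as the paper's: project the unconstrained minimizer onto $\mathcal{N}$ point by point via axiom~2, pad it to $n'$ elements with points of $\mathcal{T}$ via axiom~3, and pull back through $f$ using $\mathcal{N}^{n'}\subseteq\mathcal{M}_f$. You are in fact slightly more careful than the paper in noting that axiom~3 must be combined with the symmetry axiom to append points of $\mathcal{T}$ to the synthetic set, a step the paper's proof leaves implicit.
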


\vspace{-2em}
\Cref{prop:bound} states that our multi-formation framework achieves the better optimum, \textit{i.e.}, the synthetic dataset that is closer to the original dataset under any dataset distance measure. Note, the assumption {\small$\mathcal{N}^{n'}\subseteq \mathcal{M}_f$} means that the synthetic-dataset space by $f$ is sufficiently large to contain all data points in $\mathcal{N}$. In \Cref{theory:relaxed}, we provide theoretical results under a more relaxed assumption.

\section{Improved Optimization Techniques}
\label{analysis}
In this section, we develop optimization techniques for dataset condensation. We first analyze gradient matching \cite{dsa} and seek to provide an interpretation of why gradient matching on condensation works better than feature matching \citep{dm}. We then examine some of the shortcomings of existing gradient matching methods and propose improved techniques.

\subsection{Interpretation}
Convolutional or fully-connected layers in neural networks linearly operate on hidden features. 
From the linearity, it is possible to represent network gradients as features as in \Cref{prop1}. For simplicity, we consider one-dimensional convolution on hidden features and drop channel notations. 

\begin{prop}
Let {\small$w_t\in \mathbb{R}^{K}$} and {\small$h_t\in \mathbb{R}^{W}$} each denote the convolution weights and hidden features at the $t^\textrm{th}$ layer given the input data $x$. Then, for a loss function $\ell$, {\small$\frac{d \ell(x)}{d w_t} = \sum_i a_{t,i} h_{t,i}$}, where {\small$h_{t,i}\in \mathbb{R}^{K}$} denotes the $i^\text{th}$ convolution patch of {\small$h_t$} and {\small$a_{t,i}=\frac{d \ell(x)}{d w_t^\intercal h_{t,i}}\in \mathbb{R}$}.
\label{prop1}
\end{prop}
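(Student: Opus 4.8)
The plan is to unwind the definitions and exploit the fact that a convolution layer acts \emph{linearly} in its weights, so that the chain rule collapses into a sum over patches. First I would fix the layer index $t$ and write the pre-activation output of the convolution as the vector whose $i^\textrm{th}$ entry is $w_t^\intercal h_{t,i}$, indexed by the spatial position $i$ of the output feature map (with $h_{t,i}\in\mathbb{R}^K$ the $i^\textrm{th}$ length-$K$ patch extracted from $h_t\in\mathbb{R}^W$). The loss $\ell(x)$ depends on $w_t$ only through this finite collection of scalars $\{w_t^\intercal h_{t,i}\}_i$, so by the multivariate chain rule $\frac{d\ell(x)}{dw_t}=\sum_i \frac{d\ell(x)}{d(w_t^\intercal h_{t,i})}\,\frac{d(w_t^\intercal h_{t,i})}{dw_t}$.

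The second step is just to evaluate the two factors. The scalar $a_{t,i}:=\frac{d\ell(x)}{d(w_t^\intercal h_{t,i})}$ is exactly the quantity named in the statement, and it is well defined because the $i^\textrm{th}$ output coordinate is a genuine scalar intermediate variable in the computation graph. For the second factor, since $w_t\mapsto w_t^\intercal h_{t,i}$ is linear with $h_{t,i}$ held fixed (the patch $h_{t,i}$ is a function of $x$ and of earlier-layer weights, not of $w_t$), we get $\frac{d(w_t^\intercal h_{t,i})}{dw_t}=h_{t,i}$. Substituting yields $\frac{d\ell(x)}{dw_t}=\sum_i a_{t,i}h_{t,i}$, which is the claim. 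The same argument goes through verbatim for fully-connected layers (a single patch, $W=K$) and, with channel indices restored, for multi-channel convolutions; I would remark on this rather than redo it.

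The one genuinely delicate point — the ``hard part,'' such as it is — is making precise that $h_{t,i}$ does not depend on $w_t$, so that the partial derivative of the bilinear map $w_t^\intercal h_{t,i}$ with respect to $w_t$ really is just $h_{t,i}$ and nothing more. This is immediate in a feedforward network where layer $t$'s weights influence only layers $\ge t$, but it is worth stating explicitly as the structural assumption that legitimizes treating $h_{t,i}$ as a constant when differentiating in $w_t$. Everything else is a one-line application of the chain rule, so I would keep the write-up short: set up the notation for patches and output coordinates, invoke the chain rule through the intermediate scalars $w_t^\intercal h_{t,i}$, and collect terms.
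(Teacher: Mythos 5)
Your proposal is correct and follows essentially the same route as the paper's proof: both express the loss's dependence on $w_t$ through the output coordinates $w_t^\intercal h_{t,i}$ and apply the chain rule, with $\frac{d(w_t^\intercal h_{t,i})}{dw_t}=h_{t,i}$ by linearity. Your explicit remark that $h_{t,i}$ is independent of $w_t$ in a feedforward network is a welcome bit of care that the paper leaves implicit, but it does not change the argument.
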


\vspace{-0.2em}
\Cref{prop1} states the gradients with respect to convolution weights can be regarded as the weighted sum of local features $h_{t,i}$. Note, the weight $a_{t,i}$ means the loss function sensitivity of the $i^\textrm{th}$ output at the $t^\textrm{th}$ layer, and we can interpret the network gradients as the saliency-weighted average local features. In this respect, we can view gradient matching as saliency-weighted average local feature matching.

Intuitively, saliency-weighting selectively extracts information corresponding to target labels. In addition, by matching averaged local features, we globally compare features regardless of location, which might be beneficial for datasets where target objects are non-aligned, \textit{e.g.}, ImageNet \citep{imagenet}. We conjecture these properties explain why gradient matching performs better than feature matching. In the following, we propose an improved gradient matching method by examining the shortcomings of existing gradient matching approaches.

\subsection{Problems and Solutions}
The existing gradient matching approach by DSA uses network weights $\theta_t$ trained on a condensed dataset $\mathcal{S}$ (see \Cref{eq:matching}). However, this approach has some drawbacks: 1) In the optimization process, $\mathcal{S}$ and $\theta_t$ are strongly coupled, resulting in a chicken-egg problem that generally requires elaborate optimization techniques and initialization \citep{em}. 2) Due to the small size of $\mathcal{S}$ ({\small$\sim1\%$} of the original training set), overfitting occurs in the early stage of the training and the network gradients vanish quickly. \Cref{fig:grad} shows that the gradient norm on $\mathcal{S}$ vanishes whereas the gradient norm on the real data $\mathcal{T}$ increases when the network is trained on $\mathcal{S}$. This leads to undesirable matching between two data sources, resulting in degraded performance when using distance-based matching objectives, such as mean squared error \citep{dc}. 

\begin{figure}[t]
    \begin{tikzpicture}[define rgb/.code={\definecolor{mycolor}{RGB}{#1}},
                    rgb color/.style={define rgb={#1},mycolor}]

\begin{groupplot}[
        group style={columns=1, horizontal sep=1.05cm, 
        vertical sep=0.0cm},
        ]

\nextgroupplot[
            width=7cm,
            height=3.2cm,
            every axis plot/.append style={thick},
            grid=major,
            scaled ticks = false,
            xlabel near ticks,
            ylabel near ticks,
            tick pos=left,
            tick label style={font=\scriptsize},
            xlabel shift=-0.1cm,         
            ylabel shift=-0.15cm,
            label style={font=\scriptsize},
            xlabel style={align=center},
            xlabel={Training steps},
            ylabel={$L^1$ norm},
            xmin=0,
            xmax=500,
            ymin=-50,
            legend style={at={(1.03,1)},anchor=north west},
            legend style={nodes={scale=0.8}}
            ]

\addplot[red] table [x=step, y=real, col sep=comma]{data/grad.csv};
\addplot[blue] table [x=step, y=syn, col sep=comma]{data/grad.csv};

\legend{real, syn}

\end{groupplot}
\end{tikzpicture}
    \vspace{-2.3em}
    \caption{Evolution of $L^1$ norm of the network gradients given real or synthetic data. The x-axis represents the number of training steps of the networks. Here, both networks are trained on the synthetic data with augmentations. We measure the values on CIFAR-10 with ConvNet-3 used in DSA.}
    \label{fig:grad}
    \vspace{-0.5em}
\end{figure}

To overcome these issues, we propose to utilize networks trained on $\mathcal{T}$ instead. By doing so, we optimize $\mathcal{S}$ with networks that are no longer dependent on $\mathcal{S}$, resulting in a decoupled optimization problem: 

\vspace{-1.9em}
{\small 
\begin{align}
    &\minimize\limits_{\mathcal{S}\in\mathbb{R}^{n\times m}}\ \bar{D}\left(\nabla_\theta\ell(\theta^\mathcal{T};f(\mathcal{S})), \nabla_\theta\ell(\theta^\mathcal{T};\mathcal{T})\right).
    \nonumber
\end{align}
}

\vspace{-2.1em}
Here, $\theta^\mathcal{T}$ represents network weights trained on $\mathcal{T}$ and $\bar{D}$ denotes a distance-based matching objective. In addition, the large size of $\mathcal{T}$ alleviates the gradient vanishing from overfitting \citep{bishop}. To further enhance the effect, we utilize stronger regularization for training networks. In detail, rather than a single random augmentation strategy adopted in DSA, we propose to use a sequence of augmentations and CutMix \citep{cutmix}. Note, the mixup techniques such as CutMix effectively resolve the neural networks' over-confidence issue by using soft labels for training \citep{puzzle,comix}. To sum up, the proposed utilization of real data and stronger augmentations effectively resolve the gradient vanishing problem and enable the use of distance-based objective functions, resulting in the better distillation of learning information onto the synthetic data.\looseness=-1

\subsection{Algorithm}
We further analyze the effect of network weights $\theta^\mathcal{T}$ on condensation. In detail, we examine when networks show the best condensation performance during the learning process on $\mathcal{T}$. Here, the performance means the test accuracy of neural networks trained on the condensed data. The left subfigure in \Cref{fig:acc} shows the performance of condensed data optimized by a network trained for a specific epoch. We observe the best condensation performance by the networks in the early phase of training near 10 epochs.

\begin{figure}[t]
    \begin{tikzpicture}[define rgb/.code={\definecolor{mycolor}{RGB}{#1}},
                    rgb color/.style={define rgb={#1},mycolor}]

\begin{groupplot}[
        group style={columns=1, horizontal sep=1.2cm, 
        vertical sep=0.0cm},
        ]

\nextgroupplot[
            width=4.05cm,
            height=4.05cm,
            every axis plot/.append style={thick},
            grid=major,
            scaled ticks = false,
            xlabel near ticks,
            ylabel near ticks,
            tick pos=left,
            tick label style={font=\scriptsize},
            xlabel shift=-0.1cm,         
            ylabel shift=-0.15cm,
            label style={font=\scriptsize},
            xlabel style={align=center},
            xlabel={Pretrained epoch},
            ylabel={Condensation acc.},
            xmin=0,
            xmax=100,
            ymin=40,
            ]

\addplot[blue, mark=*, mark size=0.6pt] table [x=epoch, y=acc, col sep=comma]{data/acc.csv};




\end{groupplot}
\end{tikzpicture}
    \begin{tikzpicture}
\pgfplotsset{
    scale only axis,
    xmin=0, 
    xmax=100,
    width=2.5cm,
    height=2.5cm,
    xlabel near ticks,
    ylabel near ticks,
    tick pos=left,
    tick label style={font=\scriptsize},
    xlabel shift=-0.1cm,         
    label style={font=\scriptsize},
    xlabel style={align=center},
    every axis plot/.append style={thick},
    grid=major,
    legend pos = north east,
    legend style={nodes={scale=0.7}}
}

\begin{axis}[
  axis y line*=right,
  ymin=0.2, 
  ymax=0.6,
  ytick={0.2,0.3,0.4,0.5},
  xlabel=Pretrained epoch,
  ylabel=Cosine similarity,
  ylabel shift=-0.15cm,  
]
\addplot[red] table [x=epoch, y=sim, col sep=comma, y index=1]{data/gradcls.csv}; \label{plot_one}
\end{axis}

\begin{axis}[
  axis y line*=left,
  axis x line=none,
  ymin=200, 
  ymax=1000,
  ytick={200,400,600,800},
  ylabel=$L^1$ norm,
  ylabel shift=-0.1cm,
]

\addlegendimage{/pgfplots/refstyle=plot_one}\addlegendentry{cos}

\addplot[blue] table [x=epoch, y=class, col sep=comma, y index=2]{data/gradcls.csv}; \addlegendentry{norm}

\end{axis}

\end{tikzpicture}
    \vspace{-2.3em}
    \caption{(Left) Condensation performance from fixed pretrained networks. The x-axis represents the number of epochs a network is trained on. (Right) Gradient analysis of the pretrained networks. The left axis measures the $L^1$ norm of the network gradients given a batch of data consisting of the same class. The right axis measures the average pairwise cosine-similarity between the gradients on a single data of the same class. The values are measured on ImageNet with 10 subclasses.}
    \label{fig:acc}
    \vspace{-1.2em}    
\end{figure}

To clarify the observation, we measure the networks' gradient norm given an intra-class mini-batch (right subfigure in \Cref{fig:acc}). As a result, we find that the gradient norm increases in the early phase of training and then decreases during the further training epochs. 
We also observe a similar pattern when we measure pairwise cosine-similarity between the gradients given a single data of the same class. These results indicate the gradient directions among intra-class data coincide at the early phase of training but diverge as the training progresses. 
This phenomenon is similarly observed by \citet{hessian}; the first eigenvalue of the networks' hessian matrix increases in the early phase and decreases after a few epochs. Based on the observation, we argue that intra-class network gradients in the early training phase have more useful information to distill, and propose to utilize networks in the early training phase for condensation. Additionally, using the early phase neural networks has advantages in terms of the training cost. 

We empirically observe that using multiple network weights for condensation rather than the fixed network weights improves the generalization of the condensed data over various test models. Therefore, we alternately update $\mathcal{S}$ and $\theta^\mathcal{T}$ during the optimization process. In detail, we first initialize $\theta^\mathcal{T}$ by random initialization or loading pretrained weights trained only for a few epochs, and then we alternatively update $\mathcal{S}$ and $\theta^\mathcal{T}$. In addition, we periodically reinitialize $\theta^\mathcal{T}$ to maintain the network to be in the early training phase. 
Putting together with our multi-formation framework, we propose a unified algorithm optimizing information-intensive condensed data that compactly contain the original training data information. We name the algorithm as \textit{Information-intensive Dataset Condensation} (IDC) and describe the algorithm in \Cref{alg:matching}. Note, we adopt the siamese augmentation strategy by DSA.

\begin{figure}[t]
\vspace{-1.0em}
\begin{algorithm}[H]
    \caption{Information-Intensive Dataset Condensation}
    \label{alg:matching}
\begin{algorithmic}
\STATE {\bfseries Input:} Training data $\mathcal{T}$
\STATE {\bfseries Notation:} Multi-formation function $f$, 
parameterized augmentation function $a_\omega$, mixup function $h$, loss function $l$, number of classes $N_c$
\STATE {\bfseries Definition:} {\small$D(B,B';\theta) = \|\nabla_\theta \ell(\theta;B))-\nabla_\theta \ell(\theta;B') \|$}
\STATE Initialize condensed dataset $\mathcal{S}$
\REPEAT
\STATE Initialize or load pretrained network $\theta_1$
\FOR{$i=1$ {\bfseries to} $M$}
    \FOR{$c=1$ {\bfseries to} $N_c$}
        \STATE Sample an intra-class mini-batch {\small$T_c\sim \mathcal{T}, S_c\sim \mathcal{S}$} 
        \STATE Update {\small $S_c\leftarrow S_c - \lambda \nabla_{S_c} D(a_{\omega}(f(S_c)),a_{\omega}(T_c);\theta_i)$}
    \ENDFOR
    \STATE Sample a mini-batch \small$T\sim \mathcal{T}$
    \STATE Update \small $\theta_{i+1} \leftarrow \theta_i - \eta \nabla_{\theta} \ell(\theta_i;h(a_{\omega'}(T)))$
\ENDFOR
\UNTIL convergence
\STATE {\bfseries Output:} $\mathcal{S}$
\end{algorithmic}
\end{algorithm}
\vspace{-2.4em}
\end{figure}

\begin{table*}[!t]
\vspace{-0.6em}
\caption{Top-1 test accuracy of test models trained on condensed datasets from CIFAR-10. We optimize the condensed data using ConvNet-3 and evaluate the data on three types of networks. Pixel/Class means the number of pixels per class of the condensed data and we denote the compression ratio to the original dataset in the parenthesis. We evaluate each case with 3 repetitions and denote the standard deviations in the parenthesis. $\dagger$ denotes the reported results from the original papers. 
}
\vspace{0.5em}
\centering
{\small
\begin{tabular}{cl|ccccc|cc|c}
\toprule[1pt]
Pixel/Class & Test Model & Random & Herding & DSA & KIP & DM & IDC-I & IDC & Full dataset\\
\midrule                                   
\multirow{2}{*}{10$\times$32$\times$32} & ConvNet-3   & 37.2 & 41.7 &\ 52.1$^\dagger$ &\ 49.2$^\dagger$ & 53.8 & 58.3 (0.3) & \textbf{67.5} (0.5) & 88.1\\
\multirow{2}{*}{(0.2\%)}  & ResNet-10 & 34.1 & 35.9 & 32.9 & - & 42.3 & 50.2 (0.4) & \textbf{63.5} (0.1)	& 92.7\\
                    & DenseNet-121 & 36.5 & 36.7 & 34.5 & - & 39.0 & 49.5 (0.6) & \textbf{61.6} (0.6) & 94.2\\
\midrule                                   
\multirow{2}{*}{50$\times$32$\times$32} & ConvNet-3   & 56.5 & 59.8 &\ 60.6$^\dagger$ &\ 56.7$^\dagger$ & 65.6 & 69.5 (0.3) & \textbf{74.5} (0.1) & 88.1\\
\multirow{2}{*}{(1\%)}       & ResNet-10 & 51.2 & 56.5 & 49.7 & - & 58.6 & 65.7 (0.7) & \textbf{72.4} (0.5) & 92.7\\
                    & DenseNet-121 & 55.8 & 59.0 & 49.1 & - & 57.4 & 63.1 (0.2) & \textbf{71.8} (0.6) & 94.2\\
\bottomrule[1pt]
\end{tabular}
}
\label{tab:classification}
\vspace{-0.2em}
\end{table*}

\section{Experimental Results}
\label{exp}

In this section, we evaluate the performance of our condensation algorithm over various datasets and tasks. We first evaluate our condensed data from CIFAR-10, ImageNet-subset, and Speech Commands by training neural networks from scratch on the condensed data \citep{cifar, imagenet, google_speech}. Next, we investigate the proposed algorithm by performing ablation analysis and controlled experiments. Finally, we validate the efficacy of our condensed data on continual learning settings as a practical application \citep{cl_review}. We use multi-formation by a factor of 2 in our main experiments except for ImageNet where use a factor of 3. The other implementation details and hyperparameter settings of our algorithm are described in \Cref{imp_our}. We also provide experimental results on SVHN, MNIST, and FashionMNIST in \Cref{appendix:other}.

\subsection{Condensed Dataset Evaluation} 
\label{expsub:eval}
A common evaluation method for condensed data is to measure the test accuracy of the neural networks trained on the condensed data \citep{dsa}. It is widely known that test accuracy is affected by the type of test models as well as the quality of the data \citep{nas}. However, some previous works overlook the contribution from test model types and compare algorithms on different test models \citep{kip}. In this work, we emphasize specifying the test model and comparing the condensation performance on an identical test model for fair comparison. This procedure isolates the effect of the condensed data, thus enabling us to purely measure the condensation quality. 
We further evaluate the condensed data on multiple test models to measure the generalization ability of the condensed data across different architectures.

Baselines we consider are a random selection, Herding coreset selection \cite{herding}, and the previous state-of-the-art condensation methods; DSA, KIP, and DM \citep{dsa,kip,dm}. We downloaded the publicly available condensed data, and otherwise, we re-implement the algorithms following the released author codes. For the implementation details of the baselines, please refer to \Cref{imp_base}. We denote our condensed data with multi-formation as IDC and without multi-formation as IDC-I which can also be regarded as a method with the identity formation function. Finally, it is worth noting that KIP considers test models with ZCA pre-processing \citep{kip}. However, we believe test models with standard normalization pre-processing are much more common to be used in classification and continual learning settings \citep{autoaugment,vit,icarl}. In this section, we focus on test models with standard normalization pre-processing. For experimental results with ZCA, please refer to \Cref{appendix:zca}.

\vspace{-0.5em}
\paragraph{CIFAR-10.} 
The CIFAR-10 training set consists of 5,000 images per class each with $32\times 32$ pixels. Following the condensation baselines, we condense the training set with the storage budgets of 10 and 50 images per class by using 3-layer convolutional networks (ConvNet-3). For network architecture effect on condensation, please refer to \Cref{appendix:architecture}. We evaluate the condensed data on multiple test models: ConvNet-3, ResNet-10, and DenseNet-121 \citep{resnet,densenet}. It is worth noting that  \citet{dsa} used data augmentation when evaluating DSA but did not apply any data augmentation when evaluating simple baselines Random and Herding. This is not a fully fair way to compare the quality of data. In our paper, we re-evaluate all baselines including DSA by using the same augmentation strategy as ours and report the best performance for fair comparison. For the more detailed results on augmentation, please refer to \Cref{appendix:sa}.

\begin{table}[t]
\vspace{-0.65em}
\caption{Top-1 test accuracy of test models with the fixed training steps. Each row matches the same dataset storage size and evaluation cost. \textit{CN} denotes ConvNet-3, \textit{RN} denotes ResNet-10, and \textit{DN} denotes DenseNet-121. We measure training times on an RTX-3090 GPU.}
\vspace{0.5em}
\centering
\begin{adjustbox}{max width=\columnwidth}
\begin{tabular}{cl|ccc|cc|c}
\toprule[1pt]
Pixel & Test & \multirow{2}{*}{DSA} & \multirow{2}{*}{KIP} & \multirow{2}{*}{DM} & \multirow{2}{*}{IDC-I} & \multirow{2}{*}{IDC} & Evaluation \\
Ratio & Model &  &  &  &  &  & Time \\
\midrule                                   
\multirow{3}{*}{0.2\%} & CN  & 52.1 & 49.1 & 53.8 & 58.3 & \textbf{65.3} & 10s\\ 
                    & RN  & 32.9 & 40.8 & 42.3 & 50.2 & \textbf{57.7} & 20s\\ 
                    & DN  & 34.5 & 42.1 & 39.0 & 49.5 & \textbf{60.6} & 100s\\ 
\midrule                                   
\multirow{3}{*}{1\%} & CN  & 60.6 & 57.9 & 65.6 & 69.5 & \textbf{73.6} & 50s\\
                    & RN & 49.7 & 52.9 & 58.6 & 65.7 & \textbf{72.3} & 90s\\ 
                    & DN  & 49.1 & 54.4	& 57.4 & 63.1 & \textbf{71.6} & 400s\\ 
\bottomrule[1pt]
\end{tabular}
\end{adjustbox}
\label{tab:computation}
\vspace{-1em}
\end{table}

\begin{table*}[!ht]
\vspace{-0.6em}
\caption{Top-1 test accuracy of test models trained on condensed datasets from ImageNet-subset. We optimize the condensed data using ResNetAP-10 and evaluate the data on three types of networks. We evaluate the condensed data by using the identical training strategy.}
\vspace{0.5em}
\centering
{\small
\begin{tabular}{ccl|cccc|cc|c}
\toprule[1pt]
Class & Pixel/Class & Test Model & Random & Herding & DSA & DM & IDC-I & IDC & Full Dataset \\
\midrule                                   
\multirow{3}{*}{10} & \multirow{2}{*}{10$\times$224$\times$224} & ResNetAP-10  & 46.9 & 50.4	& 52.7 & 52.3 & 61.4 (0.8) & \textbf{72.8} (0.6) & 90.8\\
                    & \multirow{2}{*}{(0.8\%)} & ResNet-18  & 43.3 & 47.0	& 44.1 & 41.7 & 56.2 (1.2) & \textbf{73.6} (0.4) & 93.6\\
                    & & EfficientNet-B0  & 46.3 & 50.2	& 48.3 & 45.0 & 58.7 (1.4) & \textbf{74.7} (0.5) & 95.9\\
\midrule                                   
\multirow{3}{*}{10} & \multirow{2}{*}{20$\times$224$\times$224} & ResNetAP-10  & 51.8 & 57.5	& 57.4 & 59.3 & 65.5 (1.0) & \textbf{76.6} (0.4) & 90.8\\
                    & \multirow{2}{*}{(1.6\%)} & ResNet-18  & 54.3 & 57.9	& 56.9 & 53.7 & 66.0 (0.7) & \textbf{75.7} (1.0) & 93.6\\
                    & & EfficientNet-B0 & 60.3 & 59.0 & 62.5 & 57.7 & 66.3 (0.5) & \textbf{78.1} (1.0) & 95.9\\
\midrule                                   
\multirow{3}{*}{100} & \multirow{2}{*}{10$\times$224$\times$224} & ResNetAP-10  &  20.7 & 22.6 & 21.8 & 22.3 & 29.2 (0.4) & \textbf{46.7} (0.2) & 82.0\\
                    & \multirow{2}{*}{(0.8\%)} & ResNet-18  & 15.7 & 15.9 & 13.5 & 15.8 &	23.3 (0.3) & \textbf{40.1 }(0.5) & 84.6 \\
                    & & EfficientNet-B0  & 22.4 & 24.5 & 19.9 & 20.7 &	27.7 (0.6) & \textbf{36.3} (0.6) & 85.6\\
\midrule                                   
\multirow{3}{*}{100} & \multirow{2}{*}{20$\times$224$\times$224} & ResNetAP-10  & 29.7 & 31.1 & 30.7 & 30.4 &	34.5 (0.1) & \textbf{53.7} (0.9) & 82.0\\
                    & \multirow{2}{*}{(1.6\%)} & ResNet-18  & 24.3 & 23.4 & 20.0 & 23.4 & 29.8 (0.2) & \textbf{46.4} (1.6) & 84.6 \\
                    & & EfficientNet-B0  & 33.2 & 35.6 & 30.6 & 31.0 & 33.2 (0.5) & \textbf{49.6} (1.2) & 85.6\\
\bottomrule[1pt]
\end{tabular}
}
\label{tab:imgnet}
\vspace{-0.2em}
\end{table*}

\Cref{tab:classification} summarizes the test accuracy of neural networks trained on each condensed data. From the table, we confirm that both IDC and IDC-I significantly outperform all the baselines. Specifically, IDC outperforms the best baseline by over $10\%$p across all the test models and compression ratios. However, IDC requires additional training steps to converge due to the formation process in general. Considering applications where training cost matters, such as architecture search, we compare methods under the fixed training steps and report the results in \Cref{tab:computation}. That is, we reduce the training epochs when evaluating IDC, and match the number of gradient descent steps identical to the other baselines. In the case of KIP, which originally uses a neural tangent kernel for training networks, we re-evaluate the dataset by using stochastic gradient descent as others to match the computation costs. \Cref{tab:computation} shows IDC still consistently outperforms baselines by a large margin.

\vspace{-0.5em}
\paragraph{ImageNet.}
Existing condensation methods only perform the evaluation on small-scale datasets, such as MNIST or CIFAR-10. To the best of our knowledge, our work is the first to evaluate condensation methods on challenging high-resolution data, ImageNet \citep{imagenet}, to set a benchmark and analyze how the condensation works on large-scale datasets. We implement condensation methods on ImageNet-subset consisting of 10 and 100 classes \citep{imagenetsubset}, where each class consists of approximately 1200 images. We provide a detailed dataset description in \Cref{data:imagenet}. Note, KIP requires hundreds of GPUs for condensing CIFAR-10 and does not scale on ImageNet. In the ImageNet experiment, we use ResNetAP-10 for condensation, which is a modified ResNet-10 by replacing strided convolution as average pooling for downsampling \citep{dm}. For test models, we consider ResNetAP-10, ResNet-18, and EfficientNet-B0 \citep{efficientnet}.

\Cref{tab:imgnet} summarizes the test accuracy of neural networks trained on the condensed data. The table shows IDC and IDC-I significantly outperform all the baselines across the various numbers of classes, compression ratios, and test models. One of the notable results is that the existing condensation methods do not transfer well to other test models. For example, DM performs better on ResNetAp-10 compared to Random selection but performs poorly on other test models. On contrary, IDC consistently outperforms other methods regardless of test model types. This indicates that our networks trained on large real datasets extract more task-relevant information with less architectural inductive bias than randomly initialized networks (DM) or networks trained on synthetic datasets (DSA). In \Cref{fig:samples}, we provide representative condensed samples from IDC-I. Note, these samples are initialized by random real training samples. We provide the qualitative comparison of our condensed data and real training data in \Cref{appendix:visual}.

\begin{figure}[t]
    \centering
    \includegraphics[width=0.145\textwidth]{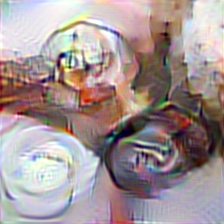}    
    \hspace{0.03em}
    \includegraphics[width=0.145\textwidth]{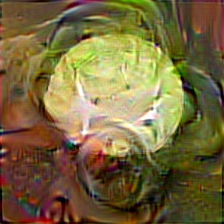}
    \hspace{0.03em}
    \includegraphics[width=0.145\textwidth]{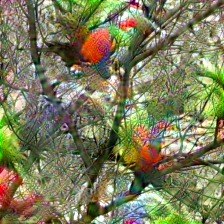}\\   
    \vspace{0.4em}
    \includegraphics[width=0.145\textwidth]{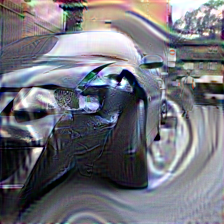}    
    \hspace{0.03em}
    \includegraphics[width=0.145\textwidth]{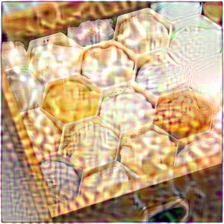}    
    \hspace{0.03em}
    \includegraphics[width=0.145\textwidth]{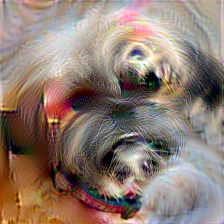}\\
    \vspace{0.4em}
    \hspace{0.04em}
    \includegraphics[width=0.145\textwidth]{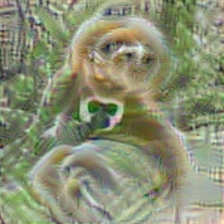}    
    \hspace{0.03em}
    \includegraphics[width=0.145\textwidth]{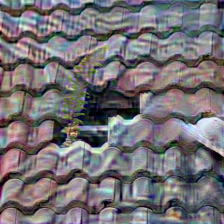}    
    \hspace{0.03em}
    \includegraphics[width=0.145\textwidth]{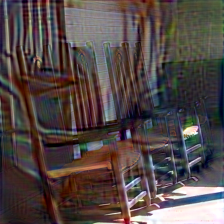}    
    \vspace{-0.5em}
    \caption{Representative samples from IDC-I condensed data on ImageNet-100. The corresponding class labels are as follows: bottle cap, cabbage, lorikeet, car wheel, honeycomb, Shih-Tzu, gibbon, tile roof, and rocking chair.}
    \vspace{-1em}
\label{fig:samples}
\end{figure}

\vspace{-0.5em}
\paragraph{Speech Domain.}
We evaluate our algorithm on speech domain data to verify the generality of our algorithm. In detail, we condense Mini Speech Commands that contains 8,000 one-second audio clips of 8 command classes \citep{google_speech}. We preprocess speech data and obtain magnitude spectrograms each of size $64\times 64$. For a detailed description of the dataset and preprocessing, please refer to \Cref{data:speech}. In the case of speech data, we use a one-dimensional multi-formation function by a factor of 2 along the time-axis of a spectrogram. \Cref{tab:speech} shows the test accuracy on the speech dataset. IDC consistently outperforms baseline methods by large margins and achieves test performance close to the full dataset training, verifying its effectiveness on speech domain as well as on image domain.\looseness=-1

\begin{table}[t]
\vspace{-0.6em}
\caption{Top-1 test accuracy of ConvNet-4 trained on condensed spectrograms. \textit{Rand.} and \textit{Herd.} denote Random and Herding.}
\vspace{0.5em}
\centering
\begin{adjustbox}{max width=\columnwidth}
\begin{tabular}{c|cccc|cc|c}
\toprule[1pt]
Spectrogram/  & \multirow{2}{*}{Rand.} & \multirow{2}{*}{Herd.} & \multirow{2}{*}{DSA} & \multirow{2}{*}{DM} & \multirow{2}{*}{IDC-I} & \multirow{2}{*}{IDC} & Full \\
Class  &  &  &  &  &  &  & Dataset \\
\midrule                                   
10$\times$64$\times$64 (1\%) & 42.6 & 56.2 & 65.0 & 69.1 & 73.3 & \textbf{82.9} & \multirow{2}{*}{93.4}\\
20$\times$64$\times$64 (2\%) & 57.0 & 72.9 & 74.0 &	77.2 & 83.0 & \textbf{86.6} & \\
\bottomrule[1pt]
\end{tabular}
\end{adjustbox}
\label{tab:speech}
\vspace{-1em}
\end{table}

\subsection{Analysis}
\paragraph{Ablation Study.} \label{expsub:abl}
In this section, we perform an ablation study on our gradient matching techniques described in \Cref{analysis}. Specifically, we measure the isolated effect of 1) networks trained on real training data, 2) distance-based matching objective, and 3) stronger regularization on networks. \Cref{tab:abl} shows the ablation results of IDC-I on CIFAR-10 condensed with 50 images per class. From the table, we find that using MSE matching objective with networks trained on the synthetic dataset (\textit{Syn+MSE}) degenerates the performance significantly. However, when we use the MSE objective with networks trained on the real training dataset, the performance significantly increases compared to the baseline (\textit{DSA}), especially on ResNet-10. Furthermore, we find that strong regularization on networks brings additional performance improvements on both test models. The results demonstrate that the distance-based objective (\textit{MSE}) better distills training information than the similarity-based objective (\textit{Cos}) when using well-trained networks.

\begin{table}[t]
\caption{Ablation study of the proposed techniques (50 images per class on CIFAR-10). \textit{Syn} denotes condensing with networks trained on the synthetic dataset and \textit{Real} denotes condensing with networks trained on the real dataset. \textit{Cos} denotes cosine-similarity matching objective, \textit{MSE} denotes mean-square-error matching objective, and \textit{Reg.} denotes our proposed stronger regularization.}
\vspace{0.5em}
\centering
\begin{adjustbox}{max width=\columnwidth}
{\small
\begin{tabular}{l|ccccc}
\toprule[1pt]
\multirow{2}{*}{Test Model} & Syn+  & Syn+ & Real+ &  Real+ & Real+Reg.+ \\
 & Cos (DSA) & MSE & Cos &  MSE & MSE (Ours) \\
\midrule                                   
ConvNet-3 & 60.6 & 25.8 & 63.4 & 67.0 & \textbf{69.5}\\
ResNet-10 & 49.7 & 25.7 & 59.1 & 61.6 & \textbf{65.7}\\
\bottomrule[1pt]
\end{tabular}
}
\end{adjustbox}
\label{tab:abl}
\end{table}

\begin{table}[!t]
\vspace{-0.6em}
\caption{Test performance comparison of IDC and IDC-I with post-downsampling (IDC-I-post) on CIFAR-10. We denote the number of stored pixels in parenthesis.}
\vspace{0.5em}
\centering
\begin{adjustbox}{max width=\columnwidth}
\begin{tabular}{l|cc|c}
\toprule[1pt]
\multirow{2}{*}{Test Model} & IDC & IDC-I-post & IDC-I\\
          & (50$\times$32$\times$32) & (200$\times$16$\times$16) & (200$\times$32$\times$32)\\
\midrule                                   
ConvNet-3 & 74.5 & 68.8 & 76.6\\
ResNet-10 & 72.4 & 63.1 & 74.9\\
\bottomrule[1pt]
\end{tabular}
\end{adjustbox}
\label{tab:downsample}
\vspace{-0.8em}
\end{table}

\vspace{-0.5em}
\paragraph{Comparison to Post-Downsampling.}
One of the simple ways to save storage budget is to reduce the resolution of the synthesized data. In this subsection, we compare our end-to-end optimization framework to a post-downsampling approach which reduces the resolution of the optimized synthetic data and resizes the data to the original size at evaluation. \Cref{tab:downsample} shows IDC significantly outperforms IDC-I with post-downsampling under the same number of stored pixels, even approaching the performance of IDC-I without downsampling which stores 4 times more pixels. This result verifies the effectiveness of the end-to-end approach considering the formation function during the optimization process, \textit{i.e.}, finding the optimal condensed data given a fixed formation function.\looseness=-1

\paragraph{On Multi-Formation Factor.} We further study the effect of multi-formation factor (\textit{i.e.}, upsampling factor). \Cref{tab:resol} summarizes the test accuracy of condensed data with different multi-formation factors on various data scales. Note, the higher multi-formation factor results in a larger number of synthetic data but each with a lower resolution. \Cref{tab:resol} shows that datasets have different optimal multi-formation factors; 2 is optimal for CIFAR-10 and 3-4 are optimal for ImageNet. These results mean that there is a smaller room for trading off resolution in the case of CIFAR-10 than ImageNet where the input size is much larger. 

\begin{table}[t]
\vspace{-0.6em}
\caption{Condensation performance over various multi-formation factors on CIFAR-10 and ImageNet-10.}
\vspace{0.5em}
\centering
{\small
\begin{tabular}{ll|cccc}
\toprule[1pt]
Dataset  & Test & \multicolumn{4}{c}{Multi-Formation Factor} \\
(Pixel/Class) & Model  & 1 & 2 & 3 & 4 \\
\midrule                                   
CIFAR-10 & ConvNet-3 & 69.5 & \textbf{74.5} & 68.9 & 62.0 \\
(50$\times$32$\times$32) & ResNet-10 & 65.7 & \textbf{72.4} & 62.9 & 59.1 \\
\midrule                                   
ImageNet-10 & ResNetAP-10 & 65.5 & 73.3 & 76.6 & \textbf{77.5}  \\
(20$\times$224$\times$224) & ResNet-18 & 66.0 & 70.8 & \textbf{75.7} & 75.2 \\
\bottomrule[1pt]
\end{tabular}
}
\label{tab:resol}
\end{table}

\subsection{Application: Continual Learning}
Recent continual learning approaches include the process of constructing a small representative subset of data that has been seen so far and training it with newly observed data \citep{icarl,rainbowmemory}. This implies that the quality of the data subset is bound to affect the continual learning performance. In this section, we utilize the condensed data as exemplars for the previously seen classes or tasks and evaluate its effectiveness under the two types of continual learning settings: class incremental and task incremental \citep{dm,dsa}. Due to lack of space, we describe the detailed settings and results of task incremental setting in \Cref{appendix:cl}.

We follow the class incremental setting from \citet{dm}, where the CIFAR-100 dataset is given across 5 steps with a memory budget of 20 images per class. This setting trains a model continuously and purely on the latest data memory at each stage \citep{gdumb}. We synthesize the exemplars by only using the data samples of currently available classes at each stage with ConvNet-3. We evaluate the condensed data on two types of networks, ConvNet-3 and ResNet-10, and compare our condensation methods with Herding, DSA, and DM.

\Cref{fig:cl_gdumb} shows that IDC-I and IDC are superior to other baselines, both in ConvNet-3 and ResNet-10. Particularly, our multi-formation approach considerably increases the performance by over $10\%$p on average. In addition, from the results on ResNet-10, we find that DSA and DM do not maintain their performance under the network transfer, whereas our condensation methods outperform the baselines regardless of the networks types. That is, it is possible to efficiently condense data with small networks (ConvNet-3) and use the data on deeper networks when using our methods.\looseness=-1

\section{Related Work}
\label{relwork}
One of the classic approaches to establishing a compact representative subset of a huge dataset is coreset selection \citep{phillips2016coresets,forgetting}. Rather than selecting a subset,
\citet{hypergradient} originally proposed synthesizing a training dataset by optimizing the training performance. Following the work, \citet{such2020generative} introduce generative modeling for the synthetic dataset. However, these works do not consider storage efficiency. The seminal work by \citet{dd} studies synthesizing small training data with a limited storage budget. Building on this work, \citet{sucholutsky2021soft} attempt to co-optimize soft labels as well as the data, but they suffer from overfitting. 
Subsequently, \citet{kip} formulate the problem as kernel ridge regression and optimize the data based on neural tangent kernel. However, this approach requires hundreds of GPUs for condensation.  
\citet{dc} propose a scalable algorithm by casting the original bi-level optimization as a simpler matching problem. Following the work, \citet{dsa} exploit siamese augmentation to improve performance, and \citet{dm} suggest feature matching to accelerate optimization. Concurrently, \citet{trajectory} proposes to optimize the condensed data by matching training trajectories on the networks trained on real data.

\vspace{-1em}
\paragraph{Discussion on Dataset Structure} 
In this work, we constrain the condensation optimization variables (\textit{i.e.}, $\mathcal{S}$) to have the same shape as the original training data. This enables us to design an intuitive and efficient formation function that has negligible computation and storage overhead. However, if we deviate from pursuing the same shape, there exist a variety of considerable condensed data structures. For example, we can parameterize a dataset as dictionary phrase coding or neural network generator \cite{dictionary_coding,gan}. Nonetheless, it is not trivial to tailor these approaches for efficient data condensation. That is, it may require more storage or expensive computation costs for synthesis. For example, \citet{siren} use multi-layer neural networks that require much more storage than a single image to completely reconstruct a single image.\looseness=-1

\begin{figure}[t]
    \begin{tikzpicture}[define rgb/.code={\definecolor{mycolor}{RGB}{#1}},
                    rgb color/.style={define rgb={#1},mycolor}]

\definecolor{gr}{RGB}{60,160,100}
\definecolor{or}{RGB}{200,140,80}
\definecolor{bl}{RGB}{120,120,220}
\definecolor{yl}{RGB}{200,200,100}
\definecolor{pp}{RGB}{200,150,240}

\begin{groupplot}[
        group style={columns=2, horizontal sep=1.6cm, 
        vertical sep=0.0cm},
        ]

\nextgroupplot[
            width=4.1cm,
            height=4.0cm,
            every axis plot/.append style={thick},
            ymajorgrids={true},
            scaled ticks = false,
            ytick style={draw=none},
            xlabel near ticks,
            ylabel near ticks,
            tick pos=left,
            tick label style={font=\scriptsize},
            xlabel shift=-0.15cm,         
            ylabel shift=-0.15cm,
            label style={font=\scriptsize},
            xlabel style={align=center},
            xlabel={Number of classes},
            ylabel={Test Accuracy},
            title style={at={(0.5,0)}, yshift=-1.3cm, font=\scriptsize},
            title = (a) ConvNet-3,
            xtick={20, 40, 60, 80, 100},
            ytick={20, 30, 40, 50, 60, 70},
            ymin=20,
            ymax=70,
            legend to name=grouplegend2,
    		legend style={legend columns=5, font=\scriptsize},
            ]
\coordinate (c1) at (rel axis cs:0,1);

\addplot[bl, mark size=1.5pt] plot [error bars/.cd, y dir=both, y explicit] table [y=herding, y error=herding-err, col sep=comma]{data/cl_gdumb_conv.csv};\addlegendentry{Herding}

\addplot[gr, mark size=1.5pt] plot [error bars/.cd, y dir=both, y explicit] table [y=dsa, y error=dsa-err, col sep=comma]{data/cl_gdumb_conv.csv};\addlegendentry{DSA}

\addplot[yl, mark size=1.5pt] plot [error bars/.cd, y dir=both, y explicit] table [y=dm, y error=dm-err, col sep=comma]{data/cl_gdumb_conv.csv};\addlegendentry{DM}

\addplot[or, mark size=1.5pt] plot [error bars/.cd, y dir=both, y explicit] table [y=mdci, y error=mdci-err, col sep=comma]{data/cl_gdumb_conv.csv};\addlegendentry{IDC-I}

\addplot[red, mark size=1.5pt] plot [error bars/.cd, y dir=both, y explicit] table [y=mdc, y error=mdc-err, col sep=comma]{data/cl_gdumb_conv.csv};\addlegendentry{IDC}

\nextgroupplot[
            width=4.1cm,
            height=4.0cm,
            every axis plot/.append style={thick},
            ymajorgrids={true},
            scaled ticks = false,
            ytick style={draw=none},
            xlabel near ticks,
            ylabel near ticks,
            tick pos=left,
            tick label style={font=\scriptsize},
            xlabel shift=-0.15cm,         
            ylabel shift=-0.15cm,
            label style={font=\scriptsize},
            xlabel style={align=center},
            xlabel={Number of classes},
            ylabel={Test Accuracy},
            title style={at={(0.5,0)}, yshift=-1.3cm, font=\scriptsize},
            title = (b) ResNet-10,
            xtick={20, 40, 60, 80, 100},
            ytick={20, 30, 40, 50, 60, 70},
            ymin=20,
            ymax=70,
            ]
\coordinate (c2) at (rel axis cs:1,1);

\addplot[bl, mark size=1.5pt] plot [error bars/.cd, y dir=both, y explicit] table [y=herding, y error=herding-err, col sep=comma]{data/cl_gdumb_resnet.csv};

\addplot[gr, mark size=1.5pt] plot [error bars/.cd, y dir=both, y explicit] table [y=dsa, y error=dsa-err, col sep=comma]{data/cl_gdumb_resnet.csv};

\addplot[yl, mark size=1.5pt] plot [error bars/.cd, y dir=both, y explicit] table [y=dm, y error=dm-err, col sep=comma]{data/cl_gdumb_resnet.csv};

\addplot[or, mark size=1.5pt] plot [error bars/.cd, y dir=both, y explicit] table [y=mdci, y error=mdci-err, col sep=comma]{data/cl_gdumb_resnet.csv};

\addplot[red, mark size=1.5pt] plot [error bars/.cd, y dir=both, y explicit] table [y=mdc, y error=mdc-err, col sep=comma]{data/cl_gdumb_resnet.csv};

\end{groupplot}

\coordinate (c3) at ($(c1)!.5!(c2)$);
\node[above] at (c3 |- current bounding box.north) {\pgfplotslegendfromname{grouplegend2}};
\end{tikzpicture}
    \vspace{-1.1em}
    \caption{Top-1 test accuracy of continual learning with condensed exemplars on CIFAR-100.}
    \label{fig:cl_gdumb}
    \vspace{-1.5em}
\end{figure}

\section{Conclusion}
\label{conclusion}
In this study, we address difficulties in optimization and propose a novel framework and techniques for dataset condensation. We propose a multi-formation process that defines enlarged and regularized data space for synthetic data optimization. We further analyze the shortcomings of the existing gradient matching algorithm and provide effective solutions. Our algorithm optimizes condensed data that achieve state-of-the-art performance in various experimental settings including speech domain and continual learning. 

\section*{Acknowledgement}
We are grateful to Junsuk Choe for helpful discussions. This work was supported by SNU-NAVER Hyperscale AI Center, Institute of Information \& Communications Technology Planning \& Evaluation (IITP) grant funded by the Korea government (MSIT) (No. 2020-0-00882, (SW STAR LAB) Development of deployable learning intelligence via self-sustainable and trustworthy machine learning and No. 2022-0-00480, Development of Training and Inference Methods for Goal-Oriented Artificial Intelligence Agents), and Basic Science Research Program through the National Research Foundation of Korea (NRF) (2020R1A2B5B03095585). Hyun Oh Song is the corresponding author.

\bibliography{main}
\bibliographystyle{icml2022}

\clearpage

\appendix
\section{Theoretical Analysis} 
\subsection{Proofs}\label{proof}
\setcounter{prop}{0}
\setcounter{definition}{0}

\begin{definition}
A function {\small$D:\mathcal{D}\times\mathcal{D}\to [0,\infty)$} is a dataset distance measure, if it satisfies the followings:
{\small$\forall X, X'\in \mathcal{D}$} where {\small$X=\{d_i\}_{i=1}^k$}, {\small$\forall i \in[1,\ldots,k]$},
\vspace{-0.6em}
\begin{enumerate}\itemsep=0pt
    \item {\small$D(X,X)=0$} and {\small$D(X,X')=D(X',X)$}.
    \item {\small$\forall d\in \mathbb{R}^m\ s.t.\ d$} is closer to {\small$X'$} than {\small$d_{i},\ D(X\setminus\{d_{i}\}\cup{\{d\}}, X') \le D(X, X')$}.
    \item {\small$D(X, X'\cup\{d_{i}\})\le D(X, X')$}.
\end{enumerate}
\label{def:measure}
\end{definition}
Note, we say data $d$ is closer to dataset {\small$X=\{d_i\}_{i=1}^{k}$} than $d'$, if {\small$\forall i\in[1,\ldots,k],\ \|d-d_i\|\le\|d'-d_i\|$}.

\begin{prop}
    If {\small$\mathcal{N}^{n'}\subseteq \mathcal{M}_f$}, then for any dataset distance measure $D$,
    
    \vspace{-2.2em}
    {\small
    \begin{align*}
        \min_{\mathcal{S}\in \mathbb{R}^{n\times m}} D(f(\mathcal{S}),\mathcal{T}) \le \min_{\mathcal{S}\in \mathbb{R}^{n\times m}} D(\mathcal{S},\mathcal{T}).
    \end{align*}
    }
\end{prop}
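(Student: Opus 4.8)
The plan is to take a minimizer (or, if the outer minima are not a priori attained, an $\varepsilon$-minimizer) $\mathcal{S}^\star=\{s_i\}_{i=1}^n$ of the right-hand side, and to transform it, \emph{without ever increasing the distance to} $\mathcal{T}$, into a dataset consisting of exactly $n'$ points all lying in $\mathcal{N}$. Once we have such a dataset, the hypothesis $\mathcal{N}^{n'}\subseteq\mathcal{M}_f$ guarantees it is of the form $f(\mathcal{S}')$ for some feasible $\mathcal{S}'\in\mathbb{R}^{n\times m}$, and chaining the inequalities yields the claim.

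The geometric core is a projection lemma: since $\mathcal{N}$ is a subspace of $\mathbb{R}^m$, the orthogonal projection $P_{\mathcal{N}}(s)$ of any $s\in\mathbb{R}^m$ satisfies $\|P_{\mathcal{N}}(s)-t\|\le\|s-t\|$ for every $t\in\mathcal{N}$. This follows from the variational characterization $\langle s-P_{\mathcal{N}}(s),\,t-P_{\mathcal{N}}(s)\rangle\le 0$ together with the Pythagorean identity $\|s-t\|^2=\|s-P_{\mathcal{N}}(s)\|^2+\|P_{\mathcal{N}}(s)-t\|^2-2\langle s-P_{\mathcal{N}}(s),\,t-P_{\mathcal{N}}(s)\rangle$. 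In particular, because every $t_j\in\mathcal{T}$ lies in $\mathcal{N}$, the point $P_{\mathcal{N}}(s_i)$ is closer to $\mathcal{T}$ than $s_i$ in the sense of the paper's definition of ``closer''.

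With this in hand I would proceed in two bookkeeping steps. First, apply the second condition of the dataset-distance-measure definition $n$ times, replacing $s_1,\dots,s_n$ one at a time by $P_{\mathcal{N}}(s_1),\dots,P_{\mathcal{N}}(s_n)$: at each step the second argument stays equal to $\mathcal{T}$ and the replacement point is closer to $\mathcal{T}$, so $D(\cdot,\mathcal{T})$ does not increase; hence $\tilde{\mathcal{S}}:=\{P_{\mathcal{N}}(s_i)\}_{i=1}^n\in\mathcal{N}^n$ satisfies $D(\tilde{\mathcal{S}},\mathcal{T})\le D(\mathcal{S}^\star,\mathcal{T})$. Second, to reach exactly $n'$ points, pad $\tilde{\mathcal{S}}$ with $n'-n$ copies of a fixed $t_1\in\mathcal{T}$ (assuming $\mathcal{T}\ne\emptyset$); using the symmetry in condition~1 and applying the third condition $n'-n$ times — each time appending to the first dataset a point that already belongs to $\mathcal{T}$ — gives $D\bigl(\tilde{\mathcal{S}}\cup\{t_1,\dots,t_1\},\mathcal{T}\bigr)\le D(\tilde{\mathcal{S}},\mathcal{T})$. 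This padded dataset has $n'$ points, all in $\mathcal{N}$ (the projections lie in $\mathcal{N}$ and $t_1\in\mathcal{T}\subseteq\mathcal{N}$), so it belongs to $\mathcal{N}^{n'}\subseteq\mathcal{M}_f$ and equals $f(\mathcal{S}')$ for some $\mathcal{S}'$. Then $\min_{\mathcal{S}}D(f(\mathcal{S}),\mathcal{T})\le D(f(\mathcal{S}'),\mathcal{T})\le D(\mathcal{S}^\star,\mathcal{T})=\min_{\mathcal{S}}D(\mathcal{S},\mathcal{T})$.

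The only genuine content is the projection lemma; the rest is iterated application of conditions 2 and 3. The one subtlety to watch is that condition~3 only lets us append to a dataset a point already present in the \emph{other} dataset, which is exactly why the padding uses a point of $\mathcal{T}$ rather than an arbitrary point of $\mathcal{N}$; and if the outer minima need not be attained, the whole argument is run with an $\varepsilon$-minimizer $\mathcal{S}^\star$ and $\varepsilon\to 0$.
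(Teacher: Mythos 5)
Your proposal is correct and follows essentially the same route as the paper's proof: orthogonally project each $s_i$ onto the subspace $\mathcal{N}$ and apply axiom~2 repeatedly, then pad the resulting dataset with $n'-n$ points of $\mathcal{T}$ via axioms~1 and~3, and finally invoke $\mathcal{N}^{n'}\subseteq\mathcal{M}_f$ to realize the padded dataset as $f(\mathcal{S}')$. The only cosmetic differences are that the paper pads with distinct elements $t_1,\dots,t_{n'-n}$ rather than repeated copies of one point and tacitly assumes the minima are attained, whereas you treat the $\varepsilon$-minimizer case explicitly.
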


\begin{proof}
    For simplicity, we denote $[1,\ldots,n]$ as $[n]$.
    Let us denote $\mathcal{T}=\{t_i\}_{i=1}^{n_t}$ and $\mathcal{S}=\{s_j\}_{j=1}^{n}$, where $t_i \in \mathcal{N}\subset\mathbb{R}^m$ and $s_j\in\mathbb{R}^m$, $\forall i\in[n_t]$ and $\forall j\in[n]$. Under the assumption that $\mathcal{N}$ is a subspace of $\mathbb{R}^m$, there exists the projection of $s_j$ onto $\mathcal{N}$, $\bar{s}_j\in\mathcal{N}$. Because $t_i \in \mathcal{N}$ for $i=1,\ldots,n_t$, {\small$\|\bar{s}_j-t_i\|\le\|s_j-t_i\|$}, $\forall j\in[n]$ and $\forall i\in[n_t]$. This means
    the projection $\bar{s}_j$ is closer to $\mathcal{T}$ than $s_j$, $\forall j\in[n]$. Let us define a partially projected dataset {\small$\bar{\mathcal{S}}_{k}=\{\bar{s}_j\}_{j=1}^{k}\cup\{s_j\}_{j=k+1}^{n}$}. Then by the second axiom of \Cref{def:measure}, 

    \vspace{-1.2em}
    {\small
    \begin{align*}
        D(\bar{\mathcal{S}}_{n}, \mathcal{T})\le D(\bar{\mathcal{S}}_{n-1}, \mathcal{T})\le\ldots\le D(\mathcal{S}, \mathcal{T}).
    \end{align*}
    }
    
    \vspace{-0.5em}
    This result means that the optimum {\small$\mathcal{S}^*=\argmin D(\mathcal{S},\mathcal{T})$} satisfies {\small$\mathcal{S}^*\in\mathcal{N}^{n}$}. Note our multi-formation augments the number of data from $n$ to $n'$ where $n<n'$. Let us denote $k'=n'-n$ and {\small$S^*_{add}=\mathcal{S}^*\cup\{t_i\}_{i=1}^{k'}$}.
    By the third axiom of \Cref{def:measure}, 

    \vspace{-2em}
    {\small
    \begin{align*}
        D(S^*_{add}, \mathcal{T})\le D(\mathcal{S}^*, \mathcal{T}).
    \end{align*}
    }
    
    \vspace{-1.em}
    The elements of {\small$S^*_{add}$} lie in $\mathcal{N}$ and {\small$S^*_{add}\in\mathcal{N}^{n'}$}. From the assumption {\small$\mathcal{N}^{n'}\subseteq \mathcal{M}_f,\ \exists \mathcal{S}\in\mathbb{R}^{n\times m}$} \textit{s.t.} {\small$f(\mathcal{S})=S^*_{add}$}. Thus,
    
    \vspace{-2em}
    {\small
    \begin{align*}
        \min_{\mathcal{S}\in \mathbb{R}^{n\times m}} D(f(\mathcal{S}),\mathcal{T})&\le D(S^*_{add}, \mathcal{T})\\
        &\le D(\mathcal{S}^*, \mathcal{T})= \min_{\mathcal{S}\in \mathbb{R}^{n\times m}} D(\mathcal{S},\mathcal{T}).
    \end{align*}
    }
\end{proof}

\begin{prop}
Let $w_t\in \mathbb{R}^{K}$ and $h_t\in \mathbb{R}^{W}$ each denote the convolution weights and hidden features at the $t^\textrm{th}$ layer given the input data $x$. Then, for a loss function $\ell$, $\frac{d \ell(x)}{d w_t} = \sum_i a_{t,i} h_{t,i}$, where $h_{t,i}\in \mathbb{R}^{K}$ denotes the $i^\text{th}$ convolution patch of $h_t$ and $a_{t,i}=\frac{d \ell(x)}{d w_t^\intercal h_{t,i}}\in \mathbb{R}$.
\end{prop}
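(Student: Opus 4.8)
The plan is to unfold the definition of a one-dimensional convolution layer and apply the chain rule, exploiting the fact that the weight vector $w_t$ enters the forward pass only through the pre-activations it produces. First I would write the layer's pre-activation at spatial location $i$ as the scalar $z_{t,i} = w_t^\intercal h_{t,i}$, where $h_{t,i}\in\mathbb{R}^K$ is the $i$-th length-$K$ patch extracted from $h_t\in\mathbb{R}^W$ (under unit stride, with boundary patches handled by the chosen padding convention). Stacking these gives the pre-activation vector $z_t = (z_{t,i})_i$, and every subsequent layer — and hence $\ell(x)$ — depends on $w_t$ only through $z_t$, since the forward map factors as $w_t \mapsto z_t \mapsto \cdots \mapsto \ell(x)$.

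Next I would apply the multivariate chain rule to this factorization: $\frac{d\ell(x)}{dw_t} = \sum_i \frac{\partial \ell(x)}{\partial z_{t,i}}\, \nabla_{w_t} z_{t,i}$. The inner gradient is immediate from linearity of the patch inner product, $\nabla_{w_t} z_{t,i} = \nabla_{w_t}(w_t^\intercal h_{t,i}) = h_{t,i}$, and the scalar coefficient is exactly the quantity named in the statement, $a_{t,i} := \frac{\partial \ell(x)}{\partial z_{t,i}} = \frac{d\ell(x)}{d(w_t^\intercal h_{t,i})}$. Substituting both gives $\frac{d\ell(x)}{dw_t} = \sum_i a_{t,i} h_{t,i}$, which is the claim. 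Restoring the channel indices that were dropped for simplicity, and replacing each patch inner product by the corresponding sum over input channels, changes nothing essential, since the map remains linear in $w_t$ per patch.

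The only point that needs care — and the closest thing to an obstacle — is making precise that $z_t$ is a legitimate intermediate variable through which $\ell$ factors, so that $a_{t,i}$ is well defined as the partial derivative in the $i$-th coordinate of $z_t$ with the other coordinates held fixed; this is precisely where weight sharing enters, as the single vector $w_t$ feeds all patches and is what turns the chain rule into a sum over $i$. I would also remark that the same derivation covers fully-connected layers as the degenerate case $W=K$ with a single patch, which is the form invoked afterward when interpreting gradient matching as saliency-weighted local feature matching.
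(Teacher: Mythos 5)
Your proposal is correct and follows essentially the same route as the paper's proof: the paper likewise writes the layer output as $o_t=[w_t^\intercal h_{t,1},\ldots,w_t^\intercal h_{t,n_o}]$ and applies the chain rule, with $\tfrac{d o_t}{d w_t}$ giving the stacked patches $h_{t,i}$. Your additional remarks on weight sharing and the fully-connected special case are compatible elaborations, not a different argument.
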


\begin{proof}
To clarify, we note that we are learning flipped kernel during training. Then the convolution output of the $t^\textrm{th}$ layer $o_t$ becomes $[w_t^\intercal h_{t,1},\ldots,w_t^\intercal h_{t,n_o}]$, where $n_o=W-K-1$ denotes the number of convolution patches given the convolution stride of 1. Then from the chain rule, 

\vspace{-1.5em}
{\small
\begin{align*}
    {\frac{d \ell(x)}{d w_t}} &= {\frac{d \ell(x)}{d o_t}} {\frac{d o_t}{d w_t}}\\
    &= \left[\frac{d \ell(x)}{d w_t^\intercal h_{t,1}},\ldots,\frac{d \ell(x)}{d w_t^\intercal h_{t,n_o}}\right] \left[h_{t,1},\ldots,h_{t,n_o} \right]^\intercal\\
    &=\sum_{i=1}^{n_0} \frac{d \ell(x)}{d w_t^\intercal h_{t,i}}h_{t,i}.
\end{align*}
}
\end{proof}

\subsection{Proposition 1 with Relaxed Assumption}\label{theory:relaxed}
In \Cref{prop:bound}, we assume {\small$\mathcal{N}^{n'}\subseteq \mathcal{M}_f$} that the synthetic-dataset space by $f$ is sufficiently large to contain all data points in $\mathcal{N}$. Relaxing the assumption, we consider when {\small$\mathcal{M}_f$} approximately covers {\small$\mathcal{N}^{n'}$}. With the following notion of $\epsilon$-cover, we describe the trade-off between the effects from the increase in the number of data and the decrease in representability of the synthetic datasets.

\begin{definition}
    Given a dataset distance measure $D$, {\small$\mathcal{M}_f$} is a $\epsilon$-cover of {\small$\mathcal{N}^{n'}$} on $D$ if {\small$\forall X'\in\mathcal{N}^{n'}$}, {\small$\exists S\in\mathbb{R}^{n\times m}$} \textit{s.t.} {\small$D(f(S),X')\le\epsilon$}.
\end{definition}

Here, we assume a dataset distance measure $D$ satisfies the triangular inequality.
From the proof above in Proposition 1, {\small$\exists S^*_{add}\in\mathcal{N}^{n'}$} \textit{s.t.}  {\small$D(S^*_{add}, \mathcal{T})\le\min_{\mathcal{S}\in \mathbb{R}^{n\times m}} D(\mathcal{S},\mathcal{T})$}. Let us denote the gain {\small$G=\min_{\mathcal{S}} D(\mathcal{S},\mathcal{T})-D(S^*_{add}, \mathcal{T})$}. If {\small$\mathcal{M}_f$} is a $\epsilon$-cover of {\small$\mathcal{N}^{n'}$} on $D$, then {\small$\exists \mathcal{S}\in\mathbb{R}^{n\times m}$} \textit{s.t.} 

\vspace{-1.5em}
{\small
\begin{align*}
   D(f(S),\mathcal{T})&\le D(S^*_{add},\mathcal{T})+D(f(S),S^*_{add})\\
   &\le D(S^*_{add},\mathcal{T})+\epsilon.
\end{align*}
}

\vspace{-1.em}
Note, we use the triangular inequality in the first inequality above and use the definition of $\epsilon$-cover in the second inequality. We can conclude that

\vspace{-1.5em}
{\small
\begin{align*}
    \min_{\mathcal{S}\in \mathbb{R}^{n\times m}} D(f(\mathcal{S}),\mathcal{T})&\le D(S^*_{add},\mathcal{T})+\epsilon\\
   &= \min_{\mathcal{S}\in \mathbb{R}^{n\times m}} D(\mathcal{S},\mathcal{T})-G+\epsilon.
\end{align*}
}

\vspace{-1.em}
To summarize, the optimization with multi-formation function $f$ can generate a synthetic dataset that has at least $G-\epsilon$ smaller distance to the original data $\mathcal{T}$ compared to when not using $f$. We can interpret $G$ as a possible gain by the increase in the number of data, \textit{i.e.}, from $n$ to $n'$, and $\epsilon$ as the representability loss in parameterization by $f$. This formulates a new research problem on data condensation: parameterization of a larger synthetic dataset that sufficiently satisfies the data regularity conditions.

\section{Datasets}
\paragraph{ImageNet-subset.} \label{data:imagenet}
Many recent works in machine learning evaluate their proposed algorithms using subclass samples from IamgeNet to validate the algorithms on large-scale data with a reasonable amount of computation resources \citep{icarl,imagenetsubset}. In our ImageNet experiment, we borrow a subclass list containing 100 classes from \citet{imagenetsubset}\footnote{\url{https://github.com/HobbitLong/CMC}}. We use the first 10 classes from the list in our ImageNet-10 experiments. We performed the experiments after preprocessing the images to a fixed size of $224\times224$ using resize and center crop functions.

\vspace{-0.5em}
\paragraph{Mini Speech Commands.} \label{data:speech}
This dataset contains one-second audio clips of 8 command classes, which is a subset of the original Speech Commands dataset \citep{google_speech}. The dataset consists of 1,000 samples for each class. We split the dataset randomly and use 7 of 8 as training data and 1 of 8 as test data. We downloaded the Mini Speech Commands from the official TensorFlow page\footnote{\url{https://www.tensorflow.org/tutorials/audio/simple_audio}}. Following the guideline provided on the official page, we load the audio clips with 16,000 sampling rates and process the waveform data with Short-time Fourier transform to obtain the magnitude spectrogram of size $128\times125$. Then, we apply zero-padding and perform downsampling to reduce the input size to $64\times64$. Finally, we use log-scale magnitude spectrograms for experiments.

\section{Implementation Details} 
\subsection{Ours} \label{imp_our}
In all of the experiments, we fix the number of inner iterations $M=100$ (\Cref{alg:matching}).
For CIFAR-10, we use data learning rate $\lambda=0.005$, network learning rate $\eta=0.01$, and the MSE objective. For other datasets, we use $L^1$ matching objective. For ImageNet-10, we use data learning rate $\lambda=0.003$  and network learning rate $\eta=0.01$. For ImageNet-100, we use data learning rate $\lambda=0.001$ and network learning rate $\eta=0.1$. For the speech dataset, we use data learning rate $\lambda=0.003$ and network learning rate $\eta=0.0003$. Rather than a single random augmentation strategy by DSA, we use a sequence of color transform, crop, and cutout for gradient matching. We train networks that are used for the matching with a sequence of color transform, crop, and CutMix \citep{cutmix}. Following \citet{dsa}, we initialize the synthetic data as random real training data samples, which makes the optimization faster compared to the random noise initialization.

We follow the evaluation setting by DSA in the case of CIFAR-10 \citep{dsa}. We train neural networks on the condensed data for 1,000 epochs with a 0.01 learning rate. We use the DSA augmentation and CutMix. Note, we apply CutMix for other baselines unless it degrades the performance. In the case of ImageNet, we train networks on the condensed data by using random resize-crop and CutMix. We use 0.01 learning rate and train models until convergence: 2,000 epochs for 10 image/class and 1,500 epochs for 20 image/class. We use an identical evaluation strategy for all cases in \Cref{tab:imgnet}.

\subsection{Baselines} \label{imp_base}
In the case of DSA \citep{dsa}, we download the author-released condensed dataset and evaluate the dataset\footnote{\url{https://github.com/VICO-UoE/DatasetCondensation}}. We train neural networks by following the training strategy from the official Github codes. We find that evaluation with CutMix degrades the performance of DSA, and report better results without CutMix in \Cref{tab:classification}. For all of the other baselines, we use an identical evaluation strategy to ours.

In the case of DM \citep{dm}, the codes are not released. Following the paper, we implemented the algorithm and tuned learning rates. As a result, we obtain superior performance than the reported values in \citet{dm}. Specifically, the original paper reports CIFAR-10 performance on ConvNet-3 of 63.0 whereas we obtain the performance of 65.6 (50 images per class). We report our improved results of DM in \Cref{tab:classification}.

\subsection{Continual Learning} \label{appendix:cl}
\paragraph{Class Incremental Setting.}
We reproduce the result based on the author released condensed dataset by \citet{dsa}. Instead of training the network from scratch as in previous works \citep{gdumb, dm}, we adopt distillation loss described in \citet{lwf} and train continuously by loading weights from the previous step and expanding the output dimension of the last fully-connected layer, which is a more realistic scenario in continual learning \citep{icarl}.
We train the model for 1000 epochs each stage using SGD with a learning rate of 0.01, decaying by a factor of 0.2 at epochs 600 and 800. We use 0.9 for momentum and 0.0005 for weight decay.

\vspace{-0.5em}
\paragraph{Task Incremental Setting.}
We follow the task incremental setting by \citet{dsa}, which consists of three digit-datasets (SVHN $\rightarrow$ MNIST $\rightarrow$ USPS). At each training stage, a new set of the corresponding data is provided, whereas the previously seen datasets are prohibited except for a few exemplars. We compare our methods with Herding and DSA, excluding DM where the data under this setting is not released. As shown in \Cref{fig:cl_digit}, we verify that our condensed data significantly outperform the baselines.

\begin{figure}[t]
    \begin{tikzpicture}[define rgb/.code={\definecolor{mycolor}{RGB}{#1}},
                    rgb color/.style={define rgb={#1},mycolor}]

\definecolor{gr}{RGB}{60,160,100}
\definecolor{or}{RGB}{200,140,80}
\definecolor{bl}{RGB}{120,120,220}

\begin{groupplot}[
        group style={columns=2, horizontal sep=1.6cm, 
        vertical sep=0.0cm},
        ]

\nextgroupplot[
            width=4.1cm,
            height=4.1cm,
            every axis plot/.append style={thick},
            ymajorgrids={true},
            scaled ticks = false,
            ytick style={draw=none},
            xlabel near ticks,
            ylabel near ticks,
            tick pos=left,
            tick label style={font=\scriptsize},
            xlabel shift=-0.15cm,         
            ylabel shift=-0.15cm,
            label style={font=\scriptsize},
            xlabel style={align=center},
            xlabel={Stage},
            ylabel={Test Accuracy},
            title style={at={(0.5,0)}, yshift=-1.3cm, font=\scriptsize},
            title = (a) ConvNet-3,
            xtick={1, 2, 3},
            ytick={94.5, 95, 95.5, 96, 96.5, 97},
            ymin=94.5,
            ymax=97,
            legend to name=grouplegend,
    		legend style={legend columns=4, font=\scriptsize},
            ]
\coordinate (c1) at (rel axis cs:0,1);

\addplot[bl, mark size=1.5pt] plot [error bars/.cd, y dir=both, y explicit] table [y=herding, y error=herding-err, col sep=comma]{data/cl_digit_conv.csv};\addlegendentry{Herding}

\addplot[gr, mark size=1.5pt] plot [error bars/.cd, y dir=both, y explicit] table [y=dsa, y error=dsa-err, col sep=comma]{data/cl_digit_conv.csv};\addlegendentry{DSA}

\addplot[or, mark size=1.5pt] plot [error bars/.cd, y dir=both, y explicit] table [y=mdci, y error=mdci-err, col sep=comma]{data/cl_digit_conv.csv};\addlegendentry{IDC-I}

\addplot[red, mark size=1.5pt] plot [error bars/.cd, y dir=both, y explicit] table [y=mdc, y error=mdc-err, col sep=comma]{data/cl_digit_conv.csv};\addlegendentry{IDC}

\nextgroupplot[
            width=4.1cm,
            height=4.1cm,
            every axis plot/.append style={thick},
            ymajorgrids={true},
            scaled ticks = false,
            ytick style={draw=none},
            xlabel near ticks,
            ylabel near ticks,
            tick pos=left,
            tick label style={font=\scriptsize},
            xlabel shift=-0.15cm,         
            ylabel shift=-0.15cm,
            label style={font=\scriptsize},
            xlabel style={align=center},
            xlabel={Stage},
            ylabel={Test Accuracy},
            title style={at={(0.5,0)}, yshift=-1.3cm, font=\scriptsize},
            title = (b) ResNet-10,
            xtick={1, 2, 3},
            ytick={96, 96.5, 97, 97.5, 98},
            ymin=96,
            ymax=98.2,
            ]
\coordinate (c2) at (rel axis cs:1,1);

\addplot[bl, mark size=1.5pt] plot [error bars/.cd, y dir=both, y explicit] table [y=herding, y error=herding-err, col sep=comma]{data/cl_digit_resnet.csv};

\addplot[gr, mark size=1.5pt] plot [error bars/.cd, y dir=both, y explicit] table [y=dsa, y error=dsa-err, col sep=comma]{data/cl_digit_resnet.csv};

\addplot[or, mark size=1.5pt] plot [error bars/.cd, y dir=both, y explicit] table [y=mdci, y error=mdci-err, col sep=comma]{data/cl_digit_resnet.csv};

\addplot[red, mark size=1.5pt] plot [error bars/.cd, y dir=both, y explicit] table [y=mdc, y error=mdc-err, col sep=comma]{data/cl_digit_resnet.csv};

\end{groupplot}

\coordinate (c3) at ($(c1)!.5!(c2)$);
\node[above] at (c3 |- current bounding box.north) {\pgfplotslegendfromname{grouplegend}};
\end{tikzpicture}
    \vspace{-1em}
    \caption{Continual learning performance with exemplars on digit datasets (SVHN-MNIST-USPS).}
    \label{fig:cl_digit}
\end{figure}

\section{Other Multi-Formation Functions}\label{appendix:formation}
In this section, we study other types of multi-formation functions that are worth considering under our framework. 

\vspace{-0.5em}
\paragraph{Multi-Scale Multi-Formation Function.}
The synthetic data by the uniform multi-formation function do not share data elements with each other (\Cref{fig:formation}). Here, we design a multi-scale formation function that increases the number of synthetic data by sharing condensed data elements across multiple synthetic data (right subfigure in \Cref{fig:formation}). \Cref{tab:multi-scale} compares the test accuracy to the default formation function on CIFAR-10. The table shows that the multi-scale approach outperforms the uniform formation function under the small storage budget where the uniform approach does not create sufficiently many synthetic data.

\vspace{-0.5em}
\paragraph{Learnable Multi-Formation Function.}
We further study the potential direction of exploiting learnable multi-formation function which can synthesize diverse representative images at the cost of additional computation overhead and storage.
In this experiment, we replace the upsampling by a  learnable function using Fast Super-Resolution Convolutional Neural Networks (FSRCNN) with a reduced number of parameters \citep{fsrcnn}.
\Cref{tab:learnable} summarizes the condensation performance of learnable multi-formation function with different factors on CIFAR-10.
While the extra learnable module does not improve performance with the formation factor of 2, it improves the performance with the factor of 3.
We conjecture that the upsampling can generate sufficiently informative synthetic data in the lower factor, but suffers from the lack of representability in the higher factor.
In such scenarios with the larger factor, the learnable multi-formation function shows promising results.

\begin{table}[t]
\vspace{-0.6em}
\caption{Test performance comparison of the uniform and multi-scale formation functions on CIFAR-10.}
\vspace{0.5em}
\centering
\small
\begin{tabular}{cl|cc}
\toprule[1pt]
Pixel/Class & Test Model& Uniform (default) & Multi-Scale\\
\midrule                                   
10$\times$32$\times$32 & ConvNet-3  & 67.5 & \textbf{69.2}\\
(0.2\%)                & ResNet-10  & 63.5 & \textbf{64.8}\\
\midrule                                   
50$\times$32$\times$32 & ConvNet-3 & \textbf{74.5} & 73.1\\
(1.0\%)                & ResNet-10 & \textbf{72.4} & 69.7 \\
\bottomrule[1pt]
\end{tabular}
\label{tab:multi-scale}
\end{table}

\begin{table}[!t]
    \vspace{-0.6em}
    \caption{Condensation performance comparison of learnable multi-formation functions to upsampling (10 images per class on CIFAR-10). \textit{CN} denotes ConvNet-3 and \textit{RN} denotes ResNet-10.} \vspace{0.5em}
    \centering
    \small
    \begin{tabular}{l|cccc}
    \toprule[1pt] 
    Test & \multicolumn{2}{c}{Factor 2} & \multicolumn{2}{c}{Factor 3} \\ Model & Upsample & FSRCNN & Upsample & FSRCNN \\  
    \midrule                                  
    CN  & 67.5 & 66.2 & 66.7 & \textbf{67.9}\\
    RN  & 63.5 & 62.0 & 60.6 & \textbf{64.4}\\
    \bottomrule[1pt]
    \end{tabular}
    \label{tab:learnable}
    \vspace{-1em}
\end{table}

\begin{table*}[t]
\vspace{-0.6em}
\centering
\caption{Top-1 test accuracy of ConvNet-3 trained on condensed datasets (average score with 3 evaluation repetitions).}
\vspace{0.3em}
{\scriptsize
\begin{tabular}{c|
c@{\hspace{1.06\tabcolsep}}c@{\hspace{1.28\tabcolsep}}c@{\hspace{1.33\tabcolsep}}c@{\hspace{1.31\tabcolsep}}c|
c@{\hspace{1.06\tabcolsep}}c@{\hspace{1.28\tabcolsep}}c@{\hspace{1.33\tabcolsep}}c@{\hspace{1.31\tabcolsep}}c|
c@{\hspace{1.06\tabcolsep}}c@{\hspace{1.28\tabcolsep}}c@{\hspace{1.33\tabcolsep}}c@{\hspace{1.31\tabcolsep}}c|
c@{\hspace{1.06\tabcolsep}}c@{\hspace{1.28\tabcolsep}}c@{\hspace{1.33\tabcolsep}}c@{\hspace{1.31\tabcolsep}}c}
\toprule[1pt]
Img/ & \multicolumn{5}{c}{SVHN} & \multicolumn{5}{c}{MNIST} & \multicolumn{5}{c}{FashionMNIST} &  \multicolumn{5}{c}{CIFAR-10}  \\
Cls & DSA$^\dagger$ & KIP$^\dagger$ & DM & IDC-I & IDC
 & DSA$^\dagger$ & KIP$^\dagger$ & DM & IDC-I & IDC
 & DSA$^\dagger$ & KIP$^\dagger$ & DM & IDC-I & IDC
 & DSA$^\dagger$ & KIP$^\dagger$ & DM & IDC-I & IDC \\
\midrule      
1 & 27.5 & 39.5 & 24.2 & 46.7 & \textbf{68.5} 
& 88.7 & 90.1 & 89.7 & 88.9 & \textbf{94.2} 
& 70.6 & 73.5 & 70.0 & 70.7 & \textbf{81.0}
& 28.8 & 38.6 & 28.9 & 36.7 & \textbf{50.6}\\
10 & 79.2 & 64.2 & 72.0 & 77.0 & \textbf{87.5}
& 97.8 & 97.5 & 97.5 & 98.0 & \textbf{98.4} 
& 84.6 & \textbf{86.8} & 84.8 & 85.3 & 86.0
& 52.1 & 49.2 & 53.8 & 58.3 & \textbf{67.5} \\
50 & 84.4 & 73.2 & 84.3 & 87.9 & \textbf{90.1} 
& \textbf{99.2} & 98.3 & 98.6 & 98.8 & \textbf{99.1}
& 88.7 & 88.0 & 88.6 & \textbf{89.1} & 86.2
& 60.6 & 56.7 & 65.6 & 69.5 & \textbf{74.5} \\
\bottomrule[1pt]
\end{tabular}
}
\label{tab:simple}
\vspace{-0.2em}
\end{table*}

\begin{table*}[t]
\vspace{-0.7em}
\centering
\caption{Ablation study on strong augmentation (S.A.), CIFAR-10 (ConvNet, 50 img/cls). We report bold values in \Cref{tab:classification,tab:computation}. Evaluation \textit{w/o} S.A. is identical to the method by DSA and DM.}
\vspace{0.3em}
{\small
\begin{tabular}{c|ccccc|cccc}
\toprule[1pt]
Evalution & Random & Herding	& DSA & KIP	& DM & IDC-I \footnotesize{\textit{w/o} S.A.} & IDC-I & IDC \footnotesize{\textit{w/o} S.A.} & IDC\\
\midrule      
\textit{w/o} S.A. & 54.7 & 57.5	& \textbf{60.6} & 55.8 & 63.0 & 67.4	& 68.6 & 72.8 & 73.5 \\
\textit{w/} S.A. & \textbf{56.5} & \textbf{59.8} & 59.5 & \textbf{57.9} & \textbf{65.6} & 67.0	& \textbf{69.5} & 74.3 & \textbf{74.5}\\
\bottomrule[1pt]
\end{tabular}
}
\label{tab:abl_sa}
\vspace{-0.5em}
\end{table*}

\section{Additional Experiments}

\subsection{Experimental Results on Other Datasets}
\label{appendix:other}
We evaluate our method on SVHN, MNIST, FashionMNIST, and CIFAR-10 including 1 img/cls setting and verify that our methods consistently outperform baselines. \Cref{tab:simple} shows multi-formation is much more effective at low compression rates (1 img/cls) and improves performance by up to 30\%p (on SVHN) compared to the best baseline. We also find that the effect of multi-formation is diminishing at (FashionMNIST, 50 img/cls) where IDC-I is the best. We conjecture that the representation loss by multi-formation at this point is greater than the gain by an increased number of data, which can be backed up by analysis in \Cref{theory:relaxed}.

\subsection{Isolated Effect of Strong Augmentation}\label{appendix:sa}
We conduct an ablation study investigating the effect of strong augmentation (S.A.), i.e., CutMix, in \Cref{tab:abl_sa}. We implement our algorithm without S.A. and evaluate all baselines under the two evaluation strategies: with or without S.A. The table shows that the gain by S.A. is only about 1\%p whereas the gain by multi-formation and algorithmic development is about 14\%p and 9\%p (by comparing IDC \textit{w/o} S.A. with DSA and DM). The result verifies that our algorithm does not mainly rely on augmentation.

\begin{figure}[t]
    \begin{tikzpicture}[define rgb/.code={\definecolor{mycolor}{RGB}{#1}},
                    rgb color/.style={define rgb={#1},mycolor}]

\definecolor{gr}{RGB}{60,160,100}
\definecolor{or}{RGB}{200,140,80}
\definecolor{bl}{RGB}{120,120,220}

\begin{groupplot}[
        group style={columns=1, horizontal sep=1.05cm, 
        vertical sep=0.0cm},
        ]

\nextgroupplot[
            width=6.6cm,
            height=4.3cm,
            every axis plot/.append style={thick},
            grid=major,
            scaled ticks = false,
            xlabel near ticks,
            ylabel near ticks,
            tick pos=left,
            tick label style={font=\scriptsize},
            xlabel shift=-0.1cm,         
            ylabel shift=-0.15cm,
            label style={font=\scriptsize},
            xlabel style={align=center},
            xlabel={Image per class},
            ylabel={Test Accuracy},
            xmin=1,
            xmax=500,
            symbolic x coords={1, 10, 50, 100, 200, 500},
            ytick={0,20,40,60,80,100},
            ymin=0,
            ymax=100,
            legend cell align=left,
            legend style={at={(1.05,1)},anchor=north west, nodes={scale=0.7}},
            ]

\addplot[bl, opacity=0.8, mark=*, mark size=0.6pt] table [y=random, col sep=comma]{data/ipc.csv};\addlegendentry{Random}

\addplot[gr, opacity=0.8, mark=*, mark size=0.6pt] table [y=herding, col sep=comma]{data/ipc.csv};\addlegendentry{Herding}

\addplot[or, opacity=0.8, mark=*, mark size=0.6pt] table [y=ours, col sep=comma]{data/ipc.csv};\addlegendentry{IDC-I}

\addplot[red, opacity=0.8, mark=*, mark size=0.6pt] table [y=ours2, col sep=comma]{data/ipc.csv};\addlegendentry{IDC}

\addplot[black, dotted] table [y=full, col sep=comma]{data/ipc.csv};\addlegendentry{Full}

\end{groupplot}
\end{tikzpicture}
    \vspace{-1.3em}
    \caption{Top-1 test accuracy of ConvNet-3 trained on condensed datasets with increasing data storage (CIFAR-10). 
    }
\label{fig:ipc}
\vspace{-1.3em}
\end{figure}

\subsection{Larger Data Storage} \label{appendix:ipc}
In this subsection, we measure the performance of condensation with larger storage budgets. \Cref{fig:ipc} shows the performance of condensed data with large storage budgets of up to 500 images per class on CIFAR-10. The figure shows that IDC outperforms other methods under the storage budgets of 200 images per class, however, IDC underperforms at 500 images per class. This result indicates that increasing the number of synthetic data via multi-formation shows diminishing returns when there are enough storage budgets to represent the original training data diversity (see \Cref{theory:relaxed} for theoretical analysis). Nonetheless, IDC-I outperforms baselines in all settings, demonstrating the effectiveness of our condensation algorithm with large storage budgets.\looseness=-1

\vspace{-0.4em}
\subsection{Network Architecture Effect on Condensation}
\label{appendix:architecture}
We analyze the effects of networks' architecture by comparing the performance of condensed data on CIFAR-10. In \Cref{tab:network}, we compare the performance of condensed data with different network architectures; simple convolution networks with various widths and depths, ResNet-10, and ResNetAP-10. Interestingly, simple ConvNets perform better than the deeper ResNet architectures on both test models. Furthermore, ConvNet-4 (\textit{CN-D}) has lower condensation performance than ConvNet-3 (\textit{CN}), although it has more convolution layers. The results indicate that a complex network is not always effective when compressing a large amount of learning information on a small storage capacity.

\vspace{-0.2em}
\subsection{Multi-Formation with Another Algorithm}
Our multi-formation strategy can be orthogonally applied to other condensation methods. To verify the generality of our strategy, we apply the multi-formation function on another condensation method, DM \citep{dm}, which uses a feature matching objective. \Cref{tab:dm_decoding} summarizes the test performance of condensed data on CIFAR-10. The table shows that our multi-formation framework consistently improves the performance of DM over various test models, demonstrating the general applicability of our framework.

\vspace{-0.2em}
\subsection{Dataset Condensation with ZCA}\label{appendix:zca}
We implement ZCA following the official KIP code \citep{kip} and test IDC on CIFAR-10 (\Cref{tab:zca}). We find that ZCA results in mild degradation in performance. We speculate that ZCA whitening, which removes pixel correlation, is not suitable for IDC's upsampling process.

\begin{table}[t]
\vspace{-0.5em}
\caption{Condensation network architecture comparison (10 img/cls on CIFAR-10). \textit{CN} denotes ConvNet-3, \textit{CN-W} denotes ConvNet-3 with twice more channels (256), \textit{CN-D} denotes ConvNet-4 (4 convolution layers), \textit{RN} denotes ResNet-10, and \textit{RN-AP} denotes ResNet-10 with average pooling instead of strided convolutions for downsampling. Note, we use instance normalization as in \citet{dc}.}
\vspace{0.5em}
\centering
{\small
\begin{tabular}{l|ccccc}
\toprule[1pt]
\multirow{2}{*}{Test Model} & \multicolumn{5}{c}{Condensation Network Architecture} \\
 &  CN & CN-W & CN-D & RN & RNAP\\
\midrule                                   
ConvNet-3 & 58.3 & \textbf{58.8} & 56.6 & 51.4 & 53.5\\
ResNet-10 & 50.2 & \textbf{50.5} & 48.7 & 47.5 & 48.8\\
\bottomrule[1pt]
\end{tabular}
}
\vspace{-0.5em}
\label{tab:network}
\end{table}

\begin{table}[t]
\caption{Test accuracy of feature matching objective by DM with our multi-formation strategy (DM+MF) on CIFAR-10.}
\vspace{0.5em}
\centering
\small
\begin{tabular}{cl|ccc}
\toprule[1pt]
Pixel/Class & Test Model& DM & DM+MF & IDC\\
\midrule                                   
50$\times$32$\times$32 & ConvNet-3 & 65.6 & 68.4 & 74.5\\
(1.0\%)                & ResNet-10 & 58.6 & 63.1 & 72.4\\
\bottomrule[1pt]
\end{tabular}
\label{tab:dm_decoding}
\vspace{-0.8em}
\end{table}

\begin{table}[!t]
\centering
\caption{Effects of ZCA whitening on IDC (CIFAR-10 with ConvNet-3). Here, \textit{S.A.} means strong augmentation.}
\vspace{0.5em}
{\small
\begin{tabular}{c|ccc}
\toprule[1pt]
Pixel/Class &  IDC \footnotesize{\textit{w/o} S.A.} + ZCA  & IDC + ZCA & IDC \\
\midrule      
10$\times$32$\times$32 & 66.6 & 66.7 &  \textbf{67.5} \\
50$\times$32$\times$32 & 72.0 & 72.5 &  \textbf{74.5} \\
\bottomrule[1pt]
\end{tabular}
}
\vspace{-0.5em}
\label{tab:zca}
\end{table}

\section{Visual Examples}
\label{appendix:visual}
We provide visual examples of IDC on CIFAR-10, ImageNet, MNIST, and SVHN in the following pages. 
In \Cref{fig:visual21,fig:visual22,fig:visual23,fig:visual24}, we compare our synthetic data samples to the real training data samples, which we used as initialization of the synthetic data. From the figure, we find that our synthesized data looks more class-representative.
We provide the full condensed data in \Cref{fig:visual12,fig:visual11,fig:visual13,fig:visual14}, under the storage budget of 10 images per class.

\begin{figure*}[!ht]
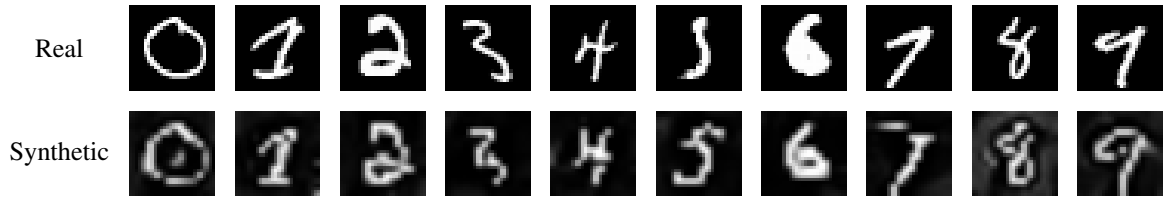

    \centering

\begin{tikzpicture}[decoration={brace}][scale=1,every node/.style={minimum size=1cm},on grid]

\foreach \j in {5,15,25,35,45,55,65,75,85,95}{
\foreach \i in {0, 1} {
    \ifnum \i=0
    \node (cifar) at (.14*\j,0) {\includegraphics[width=32px]{image/mnist/cd/data\j.png}};
    \else
    \node (cifar) at (.14*\j,1.4) {\includegraphics[width=32px]{image/mnist/real/data\j.png}};
    \fi
}
}
\node (real) at (-.8, 1.4) {Real};

\node (cd) at (-.8, -0.) {Synthetic};

\end{tikzpicture}

\vspace{-1em}
\caption{Comparison of real and synthetic images on MNIST.} 
\vspace{-1em}
\label{fig:visual22}
\end{figure*}

\begin{figure*}[!ht]
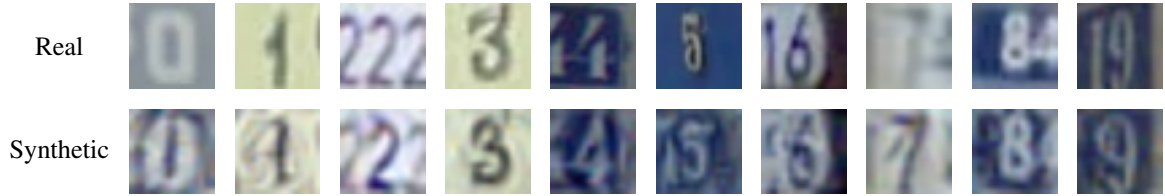

    \centering

\begin{tikzpicture}[decoration={brace}][scale=1,every node/.style={minimum size=1cm},on grid]

\foreach \j in {5,15,25,35,45,55,65,75,85,95}{
\foreach \i in {0, 1} {
    \ifnum \i=0
    \node (svhn) at (.14*\j,0) {\includegraphics[width=32px]{image/svhn/cd/data\j.png}};
    \else
    \node (svhn) at (.14*\j,1.4) {\includegraphics[width=32px]{image/svhn/real/data\j.png}};
    \fi
}
}
\node (real) at (-.8, 1.4) {Real};
\node (cd) at (-.8, -0.) {Synthetic};

\end{tikzpicture}

\vspace{-1em}
\caption{Comparison of real and synthetic images on SVHN.} 
\vspace{-1em}
\label{fig:visual21}
\end{figure*}

\begin{figure*}[!ht]
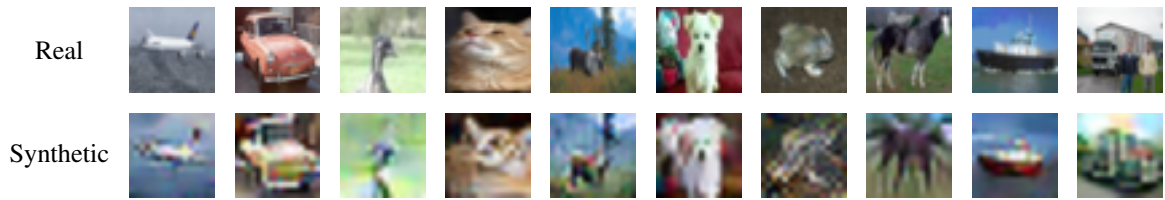

    \centering

\begin{tikzpicture}[decoration={brace}][scale=1,every node/.style={minimum size=1cm},on grid]

\foreach \j in {5,15,25,35,45,55,65,75,85,95}{
\foreach \i in {0, 1} {
    \ifnum \i=0
    \node (cifar) at (.14*\j,0) {\includegraphics[width=32px]{image/cifar/cd/data\j.png}};
    \else
    \node (cifar) at (.14*\j,1.4) {\includegraphics[width=32px]{image/cifar/real/data\j.png}};
    \fi
}
}
\node (real) at (-.8, 1.4) {Real};
\node (cd) at (-.8, 0.) {Synthetic};

\end{tikzpicture}

\vspace{-1em}
\caption{Comparison of real and synthetic images on CIFAR-10.} 
\vspace{-1em}
\label{fig:visual23}
\end{figure*}

\begin{figure*}[!ht]
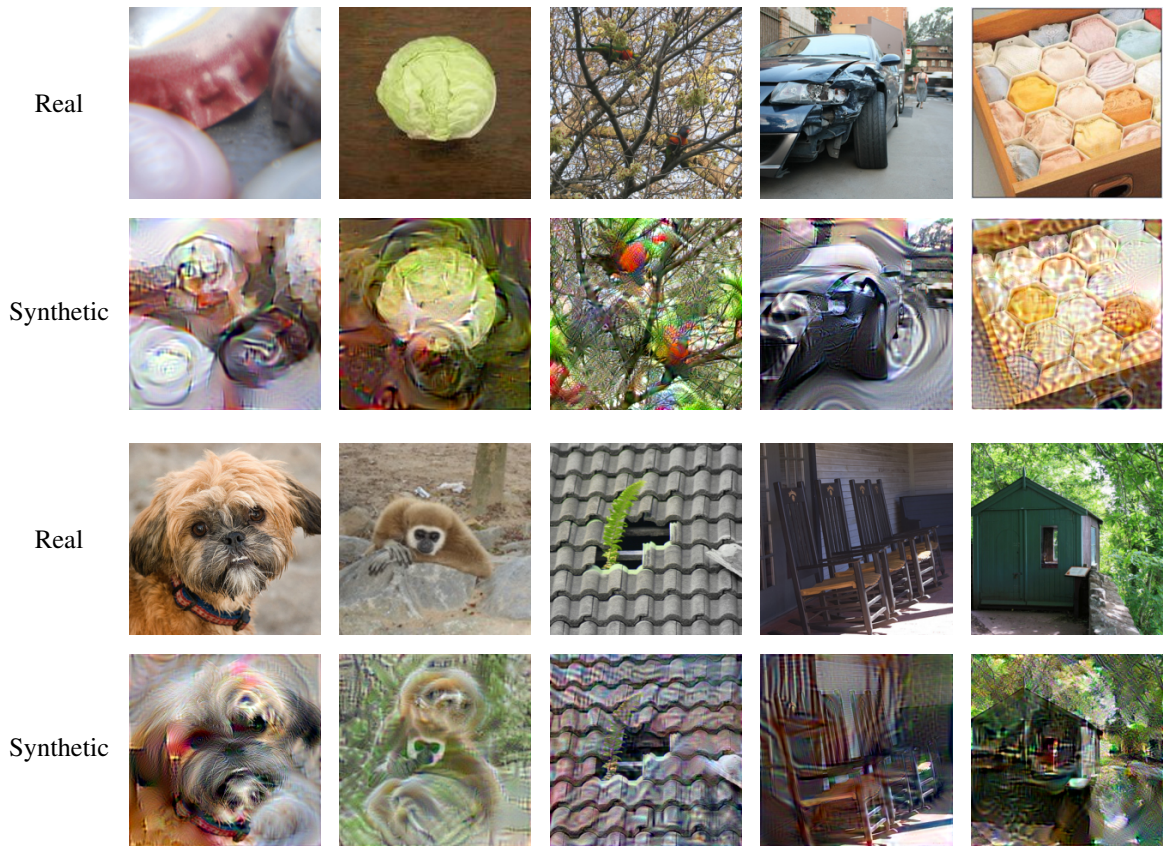

    \centering

\begin{tikzpicture}[decoration={brace}][scale=1,every node/.style={minimum size=1cm},on grid]

\foreach \j in {5,15,25,35,45}{
\foreach \i in {0, 1} {
    \ifnum \i=0
    \node (cifar) at (.28*\j,0) {\includegraphics[width=72px]{image/imagenet/cd/data\j.png}};
    \else
    \node (cifar) at (.28*\j,2.8) {\includegraphics[width=72px]{image/imagenet/real/data\j.png}};
    \fi
}
}

\foreach \j in {55,65,75,85,95}{
\foreach \i in {0, 1} {
    \ifnum \i=0
    \node (cifar) at (.28*\j-14,-5.8) {\includegraphics[width=72px]{image/imagenet/cd/data\j.png}};
    \else
    \node (cifar) at (.28*\j-14,-3.0) {\includegraphics[width=72px]{image/imagenet/real/data\j.png}};
    \fi
}
}

\node (real) at (-.8, 2.8) {Real};
\node (cd) at (-.8, 0.) {Synthetic};

\node (real) at (-.8, -3.0) {Real};
\node (cd) at (-.8, -5.8) {Synthetic};

\end{tikzpicture}

\vspace{-1em}
\caption{ImageNet: bottle cap, cabbage, lorikeet, car wheel, honeycomb, Shih-Tzu, gibbon, tile roof, rocking chair, and boat house.} 
\vspace{-1em}
\label{fig:visual24}
\end{figure*}
\begin{figure}[!ht]
    \centering

\begin{tikzpicture}[decoration={brace}][scale=1,every node/.style={minimum size=1cm},on grid]
    \node (mnist){\includegraphics[width=\columnwidth]{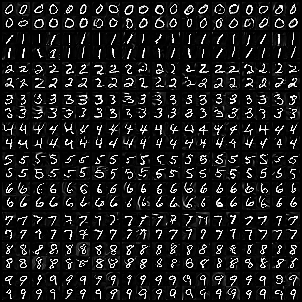}};
\end{tikzpicture}

\vspace{-1em}
\caption{Condensed images of MNIST dataset with IDC ($10\times28\times28$ pixels per class).} 
\vspace{-1em}
\label{fig:visual13}
\end{figure}

\begin{figure}[!ht]
    \centering

\begin{tikzpicture}[decoration={brace}][scale=1,every node/.style={minimum size=1cm},on grid]
    \node (cifar){\includegraphics[width=\columnwidth]{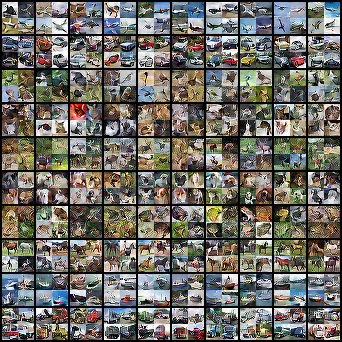}};
\end{tikzpicture}

\vspace{-1em}
\caption{Condensed images of CIFAR-10 dataset with IDC ($10\times32\times32$ pixels per class). Each row correspond to the condensed data of a single class. We list the class labels from the first row as follows: 1) airplane, 2) automobile, 3) bird, 4) cat, 5) deer, 6) dog, 7) frog, 8) horse, 9) ship, and 10) truck.} 
\vspace{-1em}
\label{fig:visual11}
\end{figure}

\begin{figure}[!ht]
    \centering

\begin{tikzpicture}[decoration={brace}][scale=1,every node/.style={minimum size=1cm},on grid]
    \node (svhn){\includegraphics[width=\columnwidth]{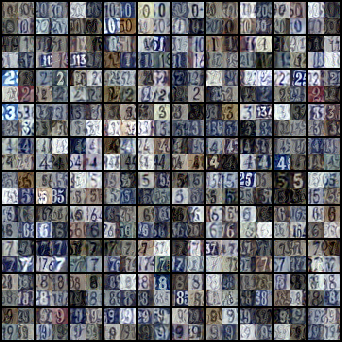}};
\end{tikzpicture}

\vspace{-1em}
\caption{Condensed images of SVHN dataset with IDC ($10\times32\times32$ pixels per class).} 
\vspace{-1em}
\label{fig:visual14}
\end{figure}

\begin{figure}[!ht]
    \centering

\begin{tikzpicture}[decoration={brace}][scale=1,every node/.style={minimum size=1cm},on grid]
    \node (imagenet){\includegraphics[width=\columnwidth]{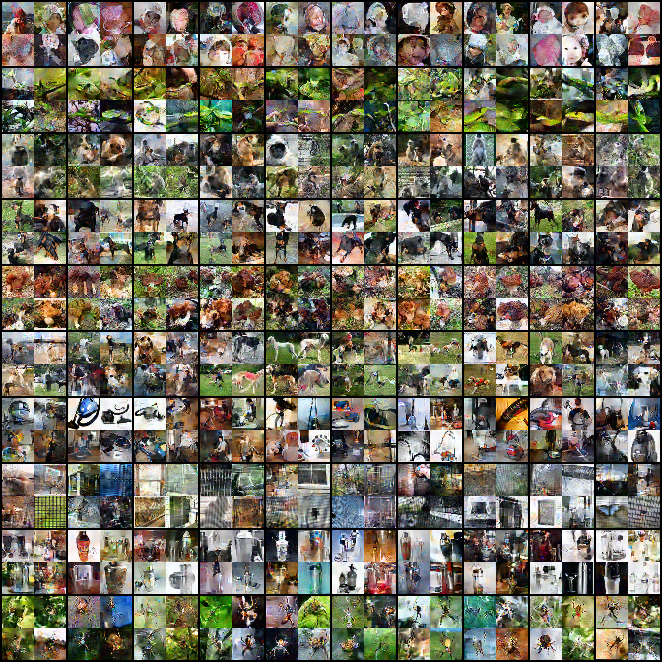}};
\end{tikzpicture}

\vspace{-1em}
\caption{Condensed images of ImageNet-10 dataset with IDC with a multi-formation factor of 2 ($10\times224\times224$ pixels per class). Each row correspond to the condensed data of a single class. We list the class labels from the first row as follows: 1) poke bonnet, 2) green mamba, 3) langur, 4) Doberman pinscher, 5) gyromitra, 6) gazelle hound, 7) vacuum cleaner, 8) window screen, 9) cocktail shaker, and 10) garden spider.} 
\vspace{-1em}
\label{fig:visual12}
\end{figure}

\end{document}